  \providecommand\BibTeX{{%
    \normalfont B\kern-0.5em{\scshape i\kern-0.25em b}\kern-0.8em\TeX}}}
\newcommand\vldbdoi{XX.XX/XXX.XX}
\newcommand\vldbpages{XXX-XXX}
\newcommand\vldbvolume{14}
\newcommand\vldbissue{1}
\newcommand\vldbyear{2025}
\newcommand\vldbauthors{\authors}
\newcommand\vldbtitle{\shorttitle} 
\newcommand\vldbavailabilityurl{}
\newcommand\vldbpagestyle{plain} 
\newmdenv[linecolor=black, backgroundcolor=gray!10]{boxedalgorithm}
\definecolor{darkblue}{rgb}{0.0, 0.0, 0.55}
\definecolor{maroon}{rgb}{0.5, 0.0, 0.0}
\newcounter{nalg}[section] 
\renewcommand{\thenalg}{\arabic{nalg}} 
\tikzset{%
  every neuron/.style={
    circle,
    draw,
    minimum size=1cm
  },
  neuron missing/.style={
    draw=none, 
    scale=4,
    text height=0.333cm,
    execute at begin node=\color{black}$\vdots$
  },
}
\newcommand*\circled[1]{\tikz[baseline=(char.base)]{
            \node[shape=circle,draw,inner sep=0.2pt] (char) {#1};}}
\author{Peizhi Wu}
\affiliation{%
 \institution{University of Pennsylvania}
 \postcode{19104}
}
\email{pagewu@cis.upenn.edu}
\author{Haoshu Xu}
\affiliation{%
 \institution{University of Pennsylvania}
 \postcode{19104}
}
\email{haoshuxu@sas.upenn.edu}
\author{Ryan Marcus}
\affiliation{%
 \institution{University of Pennsylvania}
 \postcode{19104}
}
\email{rcmarcus@cis.upenn.edu}
\author{Zachary G. Ives}
\affiliation{%
 \institution{University of Pennsylvania}
 \postcode{19104}
}
\email{zives@cis.upenn.edu}
\definecolor{mycolor}{RGB}{255,0,0}
\newcommand{\eat}[1]{}
\newcommand{\neucdf}{\mbox{$\textsc{NeuroCDF}$}\xspace}
\newcommand{\name}{\mbox{$\textsc{SeConCDF}$}\xspace}
\newcommand{\pts}{\mbox{$\textsc{PtsHist}$}\xspace}
\newcommand{\quadh}{\mbox{$\textsc{QuadHist}$}\xspace}
\newcommand{\quick}{\mbox{$\textsc{Quicksel}$}\xspace}
\newcommand{\samp}{\mbox{$\textsf{Sampling}$}\xspace}
\newcommand{\pg}{\mbox{$\textsf{PostgreSQL}$}\xspace}
\newenvironment{sproof}{%
  \proof}{\endproof}
\newcommand{\set}[1]{\mbox{${\bm{\mathcal{{#1}}}}$\xspace}}
\newcommand{\vcd}[1]{\mbox{\xspace\texttt{VC-dim} ({#1})\xspace}}
\newtheorem{assumption}{Assumption}
\numberwithin{assumption}{section}
\newtheorem{example}{Example}
\numberwithin{example}{section}
\newtheorem{lemma}{Lemma}
\numberwithin{lemma}{section}
\newtheorem{proposition}{Proposition}
\numberwithin{proposition}{section}
\newcommand{\abs}[1]{\left|#1\right|} 
\newcommand{\cbr}[1]{\left\{#1\right\}}     
\newcommand{\rbr}[1]{\left(#1\right)}       
\newcommand{\sbr}[1]{\left[#1\right]}       
\newcommand{\tv}[2]{\mathop{TV}\rbr{#1,#2}}
\newcommand{\er}{\mathrm{er}}
\def\cT{\mathcal{T}}
\def\cX{\mathcal{X}}
\def\hf{\hat{f}}
\def\hS{\hat{S}}
\def\EE{\mathop{\mathbb{E}}}
\def\RR{\mathbb{R}}
\newcommand{\1}{\mathbbm{1}} 
\DeclareMathOperator{\supp}{supp}
\newcommand{\sign}{\mathrm{sgn}}
\newcommand{\fatt}[1]{\mbox{$\textrm{fat}\rbr{#1;\gamma}$}\xspace}
\newcommand{\ceil}[1]{\lceil {#1} \rceil}
\newcommand{\flr}[1]{\lfloor {#1} \rfloor}
\newcommand{\la}{\langle}
\newcommand{\ra}{\rangle}
\newtheoremstyle{mytheorem}
  {3pt}
  {3pt}
  {\itshape}
  {}
  {\itshape\bfseries}
  {.}
  {.5em}
  {\thmname{#1}\thmnumber{\@ifnotempty{#1}{ }#2}%
   \thmnote{ {\the\thm@notefont(#3)}}}
\theoremstyle{mytheorem}
\newtheorem{theorem}{Theorem}
\numberwithin{theorem}{section}
\begin{document}

 

\title{A Practical Theory of Generalization in Selectivity Learning}

\begin{abstract}
\emph{Query-driven machine learning models} have emerged as a promising estimation technique for query selectivities. 
Yet, surprisingly little is known about the efficacy of these techniques from a theoretical perspective, as there exist substantial gaps between practical solutions and state-of-the-art (SOTA) theory based on the Probably Approximately Correct (PAC) learning framework. In this paper, we aim to bridge the gaps between theory and practice. First, we demonstrate that selectivity predictors induced by \emph{signed measures} are learnable, which relaxes the reliance on \emph{probability measures} in SOTA theory. More importantly, beyond the PAC learning framework (which only allows us to characterize how the model behaves when both training and test workloads are drawn from the \emph{same} distribution), we establish, under mild assumptions, that selectivity predictors from this class exhibit favorable \emph{out-of-distribution} (OOD) generalization error bounds. 

These theoretical advances provide us with a better understanding of both the in-distribution and OOD generalization capabilities of query-driven selectivity learning, and facilitate the design of two general strategies to improve OOD generalization for existing query-driven selectivity models. We empirically verify that our techniques help query-driven selectivity models generalize significantly better to OOD queries both in terms of prediction accuracy and query latency performance, while maintaining their superior in-distribution generalization performance.

\end{abstract}

\maketitle

\eat{
\pagestyle{\vldbpagestyle}
\begingroup\small\noindent\raggedright\textbf{PVLDB Reference Format:}\\
\vldbauthors. \vldbtitle. PVLDB, \vldbvolume(\vldbissue): \vldbpages, \vldbyear.\\
\href{https://doi.org/\vldbdoi}{doi:\vldbdoi}
\endgroup
\vspace{-0.3em}
\begingroup
\renewcommand\thefootnote{}\footnote{\noindent
This work is licensed under the Creative Commons BY-NC-ND 4.0 International License. Visit \url{https://creativecommons.org/licenses/by-nc-nd/4.0/} to view a copy of this license. For any use beyond those covered by this license, obtain permission by emailing \href{mailto:info@vldb.org}{info@vldb.org}. Copyright is held by the owner/author(s). Publication rights licensed to the VLDB Endowment. \\
\raggedright Proceedings of the VLDB Endowment, Vol. \vldbvolume, No. \vldbissue\ %
ISSN 2150-8097. \\
\href{https://doi.org/\vldbdoi}{doi:\vldbdoi} \\
}\addtocounter{footnote}{-1}\endgroup

\ifdefempty{\vldbavailabilityurl}{}{
\vspace{.3cm}
\begingroup\small\noindent\raggedright\textbf{PVLDB Artifact Availability:}\\
The source code, data, and/or other artifacts have been made available at \url{\vldbavailabilityurl}.
\endgroup
}
}

\section{Introduction}

We study the learning of selectivity functions for selection queries in database management systems (DBMSes). 
As the key to effective query optimization, selectivity estimation has continued to be one of the most important problems in DBMSes since the 1980s~\cite{selinger1979access,lynch1988selectivity}. The earliest approach was to collect basic statistics (such as histograms) for selectivity estimation, and then to make uniformity (within a bucket) and independence (among columns) assumptions. Although widely adopted in real DBMSes due to its simplicity, this approach is prone to large estimation errors~\cite{ioannidis1991propagation,leis2015good}.

More recently, selectivity estimation has been formulated as a machine learning (ML) problem, where the system learns from observed samples (data or queries) to make selectivity predictions for incoming queries. Proposals for learning-based selectivity estimation can be broadly categorized into \emph{data-driven} and \emph{query-driven} models (with a few exceptions in the form of hybrid models). Data-driven techniques~\cite{yang2020neurocard, wu2023factorjoin, kim2024asm, heddes2024convolution, meng2023selectivity,zhang2024duet, deeds2023safebound, deepdb} build models of the data distribution by scanning the underlying data. Conversely, query-driven techniques either learn a regression model from query features to selectivity~\cite{kipf2019learned,chen1994adaptive}, or model the data distribution from a set of observed queries and their selectivities~\cite{aboulnaga1999self}. 

In this paper, we focus on query-driven models~\cite{kipf2019learned, dutt2019selectivity, reiner2023sample, wang2023speeding, li2023alece, park2020quicksel,hu2022selectivity} as they enjoy a smaller model size, faster training, and possibly faster inference (for example, regression models~\cite{kipf2019learned, dutt2019selectivity, li2023alece}) compared to data-driven models. In addition, they can also achieve much better performance than traditional histograms~\cite{kipf2019learned}. 

\smallskip

\vspace{-0.2em}
\noindent \textbf{Importance of theoretical understanding of generalization. }
In machine learning, \emph{generalization} (``a central goal in pattern recognition~\cite{bishop2006pattern}'') refers to a model's ability to perform well on new, unseen data that was not part of the training set. With respect to query-driven selectivity learning, the large variability in queries seen in practice means that any training workload can represent only a \emph{tiny} subset of all possible queries. Therefore, it is crucial to accurately characterize the generalization ability of selectivity models, specifically how they perform on queries that were \emph{not} seen during training. This understanding is essential to ensure reliable predictions in real-world applications.
Yet, surprisingly, there is limited theoretical analysis of the generalizability of query-driven models. An initial and promising step towards such understanding~\cite{hu2022selectivity} proves that selectivity functions are learnable using the Probably Approximately Correct (PAC) learning framework~\cite{kearns1994introduction}. However, significant gaps remain in our understanding.

\smallskip

\vspace{-0.2em}
\noindent \textbf{Limitations of prior results. } 
The current SOTA result~\cite{hu2022selectivity} assumes that \emph{every selectivity predictor in the hypothesis class is induced by a probability measure}. Consequently, \emph{learnability} (in-distribution generalization, to be formally introduced in \S~\ref{subsec: PAC}) results can be applied \emph{only} to a small fraction of existing query-driven models (\textit{e.g.,} those that build histograms from queries~\cite{aboulnaga1999self}). Indeed, as we will see later in the paper, predictors from regression-based query-driven models, which achieve impressive empirical performance, are not induced by a probability measure. Therefore, existing learnability results~\cite{hu2022selectivity} cannot be applied to these practical approaches. Given this gap between theory and practice, a natural question arises:

\vspace{-0.2em}
\medskip
\begin{minipage}{0.44\textwidth}
\paragraph{\textbf{Question 1:}} \textit{Is it feasible to reduce the reliance on probability measures, thereby broadening our theoretical understanding of selectivity learning models?}
\end{minipage}
\medskip

\noindent Another challenge in applying the theoretical results to practical scenarios is that PAC learning, as a framework, \emph{only} allows us to quantify the \emph{in-distribution} generalization error, where both training and test queries are drawn from the \emph{same} distribution.  This means that previous theory~\cite{hu2022selectivity} based on PAC learning is \emph{not} able to characterize generalization error for OOD scenarios. Nevertheless, in the real world, query workloads may shift constantly~\cite{negi2023robust, wu2024modeling}. This raises another, perhaps more challenging, question:

\medskip
\begin{minipage}{0.44\textwidth}
\paragraph{\textbf{Question 2:}} \textit{Given mild assumptions, is it feasible to quantify \textbf{OOD generalization error} in selectivity learning, thereby enhancing the practical relevance of theoretical results?}
\end{minipage}
\medskip


\noindent Our first goal in this paper is to answer these two questions theoretically. Thereafter, based on the new generalization results, we design new learning paradigms/frameworks for improving selectivity estimation in practice, which leverage the theoretical results to provide formal guarantees.



\smallskip

\noindent \textbf{A sketch of our results. } The paper delivers two \emph{positive and encouraging} theoretical results toward answering the two questions:  

\begin{itemize} [leftmargin=*]
    \item Addressing Question 1, we introduce a new theoretical result of learnability (\textit{i.e.,} in-distribution generalization) that applies to selectivity functions/models whose predictions are induced by a \emph{signed measure}, removing the positivity and sum-to-unity constraints that are required by prior work.

    \item More interestingly, under mild assumptions, we establish \emph{non-trivial} OOD generalization error bound for selectivity predictors that are induced by a signed measure.  The new result, \emph{beyond the PAC learning framework}, quantifies the generalization error when training and testing workloads do not follow the same distribution, hence answering Question 2. 
    For a taste of our theory, our main theorem (Theorem \ref{thm: ood}) is simplified below.

    \medskip
    \noindent\fbox{\begin{minipage}{0.44\textwidth}
    \paragraph{\textbf{Simplified Theorem \ref{thm: ood}}}\textit{For any selectivity estimator $\hat{S}$ that is induced by a signed measure, if $\hat{S}$ is trained under distribution $Q$ with in-distribution generalization error $\er_Q(\hat{S})$ upper bounded by $\epsilon$ with probability at least $1-\delta$,
    then under a different testing distribution $P$, the out-of-distribution generalization error $\er_P(\hat{S})$ satisfies}
    \begin{align*}
        \er_P(\hat{S}) \leq O(\sqrt{\epsilon}) 
    \end{align*} 
    \textit{with probability at least $1-\delta$, under mild assumptions on distribution $P$ and $Q$ (see Theorem \ref{thm: ood} for details). }
    \end{minipage}}
    \medskip
\end{itemize}

\noindent A key implication of our result is that, for any class of selectivity predictors that is induced by signed measures, both our in-distribution and OOD generalization results apply \emph{immediately}.

\medskip
\noindent \textbf{Improvement strategies inspired by our theory}. From this aspect of our theory, we propose novel and practical methodologies for \emph{improving existing query-driven selectivity learning models.}
\begin{itemize} [leftmargin=*]
    \item We propose a new modeling paradigm for query-driven selectivity learning, \neucdf, which models the underlying cumulative distribution functions (CDFs) using a neural network. \neucdf is proved to be induced by signed measures, and thus enjoys the theoretical guarantees of our theory, and enjoys the superior empirical performance of deep learning. Although challenging to optimize with relative error metrics like Qerror, \neucdf \emph{provably} offers better generalization performance for OOD queries, compared to the common paradigm for selectivity estimation that targets the query selectivity directly.

    \item Inspired by the lessons learned from our theory and \neucdf, we propose a general training methodology for enhancing existing query-driven selectivity models. \name incorporates the idea of CDF modeling of \neucdf into query-driven models by enforcing model \textbf{Se}lf-\textbf{Con}sistency with the learned \textbf{C}umulative \textbf{D}istribution \textbf{F}unctions. However, unlike \neucdf, \name keeps the original loss functions (Qerror or RMSE) of existing query-driven models, which allows for good in-distribution generalization with either relative or absolute loss functions. Moreover, 
    the CDF self-consistency training of \name significantly enhances model OOD generalization ability.
\end{itemize}
\noindent \textbf{Takeaways from the experiments. } Note that the proposed improvement strategies are \emph{orthogonal} to selectivity model architectures, making them applicable to various existing models. Our primary goal is \emph{not} to outperform current SOTA query-driven selectivity learning models, but to validate the practicality of our theory by designing algorithms that improve the OOD generalization capabilities of existing models \emph{with theoretical guarantees.} Thus, we focus our experimental evaluation on aspects in which our strategies are expected to provide improvements. Indeed, this focused approach has yielded clear, compelling results: across both single- and multi-table datasets, our strategies can \emph{significantly} improve the OOD generalization of existing selectivity learning models, in terms of both estimation accuracy (\textit{i.e.,} smaller Qerror and RMSE) and query running time performance (\textit{i.e.,} lower query latency).

\smallskip
\noindent \textbf{Organization.} This paper is organized as follows: Section \S~\ref{section.prior} reviews prior work on query-driven selectivity learning.  \S~\ref{section.preliminaries} outlines definitions and the problem setup. In \S~\ref{section.theory}, we introduce our new theory, followed by two improvement strategies in \S~\ref{section.neucdf} and \S~\ref{section.training}. Our algorithms are evaluated in \S~\ref{section.eval}, and we conclude in  \S~\ref{section.conclusion}.

\section{Prior Work~\label{section.prior}} 

Selectivity estimation dates back to the beginning of query processing~\cite{selinger1979access}, where rather than computing intermediate results and then finding query plans~\cite{wong1976decomposition}, System-R instead used histograms and independence assumptions. Such techniques were refined to use queries themselves to compute histograms~\cite{aboulnaga1999self, bruno2001stholes, lim2003sash},  query expression statistics~\cite{bruno2002exploiting} and adjustments to correlated predicates~\cite{markl2003leo}. More recent \emph{learned data-driven methods}~\cite{deepdb, yang2020neurocard} do offline computation over samples of existing database instances to build models of data distributions in the presence of skew and correlations. \emph{Learned query-driven database systems} can learn or improve an ML model for a variety of database components, by using the execution log of a query workload~\cite{anagnostopoulos2017query,anagnostopoulos2015learning,anagnostopoulos2015learningidcm,xiu2024parqo,li2023alece}. More recently, there is active work on workload-aware cardinality predictors~\cite{wu2018towards,kipf2019learned,wu2021unified}.
In this paper, we consider several families of selectivity estimation techniques.

\noindent
\textbf{Parametric Functions~\cite{chen1994adaptive}.} The early approach fits a parametric function (\textit{e.g.,} linear and polynomial) to observed queries. These functions take a query as input and produce a selectivity estimate. However, the performance of parametric functions is not as good as more recent approaches due to the limited model capacity.

\noindent \textbf{Histograms~\cite{aboulnaga1999self, markl2007consistent}.}  
Histogram-based models, widely studied in database literature, build histograms from query workloads by adjusting bucket frequencies to correct prior errors or by aligning with a maximum entropy distribution with observed queries. They assume uniformity within buckets and independence across columns (or features), which could lead to large estimation errors.

\noindent \textbf{LEO~\cite{stillger2001leo}.} Intuitively, LEO can be seen as a combination of parametric functions and histograms --- it learns the adjustment factors from observed queries to correct incorrect statistics such as histograms. Specifically, LEO collects a set of previous ratios $r = \frac{act\_sel}{stat\_set}$ of actual selectivity ($act\_set$) and statistics estimate ($stat\_set$) from past queries. To estimate an incoming query, LEO uses the ratios to adjust the statistics estimate by multiplying it by a chosen adjustment ratio $r$. For example, consider a query asks for the range $\{x<1\}$ and the selectivity estimate of the histogram for the query is $\hat{Hist}(x<1)$. LEO produces the adjusted estimate by $\text{adjusted\_sel} = \hat{Hist}(x<1) * r(x<1)$, where $r(x<1)$ is the collected adjustment factor at $x=1$. If there is no adjustment factor for $x=1$, LEO computes the factor by linear interpolation.

\noindent \textbf{Deep Learning Models~\cite{kipf2019learned, dutt2019selectivity}.} More recently, deep learning models have been proposed to learn the mapping from a query to its selectivity prediction.
Deep learning models function as regression models in a way that is similar to parametric functions but has a larger model capacity and much better performance.

\section{\mbox{Preliminaries and Problem Setup}~\label{section.preliminaries}}

In this section, we start by defining key concepts for selectivity estimation in \S~\ref{subsec: select}. We then introduce measure theory in \S~\ref{subsec: sgn_m} due to its connection with selectivity functions and its importance in shaping our theory. Next, we frame selectivity estimation as a learning problem in \S~\ref{subsec: learning}, discuss the PAC learning framework in \S~\ref{subsec: PAC}, and review existing theoretical results in \S~\ref{section.review}. The section concludes with an analysis of the probability measure assumption in \S~\ref{section.gap}, motivating the goals of this paper.

\subsection{Selectivity Functions of Range Queries} \label{subsec: select}

\textbf{Range Space.} Consider a $d$-dimensional dataset $D$. A range space is defined as $\Sigma = (\mathcal{X}, \bm{\mathcal{R}})$. $\mathcal{X}$ is a set of objects (\textit{e.g.,} tuples or data points in $D$). $\set{R}$ is a collection of ranges $R$, which is \emph{a subset of $\mathcal{X}$}. For instance, $\set{R}$ can be a set of all $d$-dimensional hyper-rectangles. 

\noindent \textbf{Range Queries.} A range query $q$ is defined as a query that retrieves tuples within the range $R_q$. Thus range query $q$ and its querying range $R_q$ are interchangeable. We focus on range selection queries, corresponding to \emph{$d$-dimensional hyper-rectangles.} Join queries can be viewed as range selection queries over the join result.

\noindent \textbf{Selectivity (Cardinality) Functions.} For a dataset $D$, let $P_D$ be the data probability distribution over $D$, we define the selectivity functions as $S_D(R) = P_{x \sim P_D} (x \in R)$, or equivalently,
\begin{equation} \label{eq.selfunc2}
    S_D(R) = \sum_{x \in R}P_D(x)
\end{equation}
Another term is cardinality (the output size of a range query). The relationship between cardinality $C_D(R)$ and selectivity can be written as $C_D(R) = S_D(R) \cdot |T|$ where $|T|$ is the size of table $T$. 

\vspace{-0.5em}
\subsection{Measure Theory} \label{subsec: sgn_m}
\noindent  \textcolor{black}{\textbf{Basic Concepts.} We first formally introduce fundamental notations from measure theory that will be used to shape our theorems}.
A $\sigma$-algebra $\set{M}$ of “measurable” sets is a non-empty collection of subsets of $\cX$ closed under complements and countable unions and intersections. For all practical applications, it holds that $\set{M} \supset \set{R}$.

A function $\mu: \set{M} \to \RR$ is a \emph{probability measure} on $(\cX,\set{M})$ if it satisfies 
\begin{enumerate}[leftmargin=*,label=\textbf{C\arabic*.},ref=\textbf{C\arabic*}]
    \item \label{cond.add} 
    Countable additivity: if $E_1, E_2,...$ is a countable family of disjoint sets in $\set{M}$, then $\mu\rbr{\bigcup_{n=1}^{\infty} E_n} = \sum_{n=1}^{\infty} \mu(E_n)$.
    \item \label{cond.pos}
    Positivity: $\mu(E)\geq 0$ for any
    $E \in \set{M}$.
    \item \label{cond.one} Sum to unity: $\mu(\cX)=1$.
\end{enumerate} 
If $\mu$ only satisfies \ref{cond.add} and \ref{cond.pos}, it is called a \emph{measure}; if it only satisfies \ref{cond.add}, then it is a \emph{signed measure}. A signed measure is essentially the difference between two measures.

\textcolor{black}{We now define \textbf{\emph{induction}} for selectivity functions using measure theory.}
A selectivity estimate $\hat{S}: \set{R} \to \RR$ is said to be \textbf{\emph{induced}} by a (probability or signed) measure if there exists a  measure, denoted by $\mu_{\hat{S}}$, that satisfies  $\hat{S}(R)=\mu_{\hat{S}}(R) \text{ for all } R \in \set{R}$. \textcolor{black}{Intuitively, \ref{cond.add} implies the \textit{finite additivity} of selectivity functions: 
     $\hat{S}(R_1) = \hat{S}(R_2) + \hat{S}(R_3)$ 
  if $R_1 = R_2 \cup R_3$ and $ R_2 \cap R_3 = \emptyset$.  Moreover, \ref{cond.pos} requires that $\hat{S}$ only outputs positive values; \ref{cond.one} means that the values of $\hat{S}$ sum to 1 over the entire set of data points.}
\begin{figure}
  \centering
\begin{minipage}[c]{0.20\textwidth}
    \centering
    \tikzstyle{every node}=[font=\Large]

    \begin{tikzpicture} [scale=0.6, transform shape]
      \node[label=left:{$A$}] (A) at (0,0) {};
      \node[label=right:{$B$}] (B) at (4,0) {};
      \node[label=above:{$C$}] (C) at (2,3.7) {};

      \fill (A) circle (2pt);
      \fill (B) circle (2pt);
      \fill (C) circle (2pt);

      \node[draw, red, rotate around={0:(2,0)}, minimum width=4.5cm, minimum height=1.3cm, ellipse] at (2,0) {$R_{AB}$};

      \node[draw, blue, rotate around={60:(0,1.75)}, minimum width=4.5cm, minimum height=1.3cm, ellipse] at (-0.5,1) {$R_{AC}$};

      \node[draw, black, rotate around={-60:(0,1.75)}, minimum width=4.5cm, minimum height=1.3cm, ellipse] at (4.5,1) {$R_{BC}$};

      \node[draw, teal, minimum width=6cm, minimum height=5.7cm, ellipse] at (2,1.4) {$R_{ABC}$};
    \end{tikzpicture}
  \end{minipage}
  \hfill
  \begin{minipage}[c]{0.27\textwidth}
    \centering
    \begin{table}[H]
    \scalebox{0.85}{ 
        \begin{tabular}{|c|c|c|c|c|}
        \hline
         & $R_{ABC}$ &$R_{AB}$ &$R_{AC}$ &$R_{BC}$\\ \hline
         \rowcolor{LimeGreen!50} 
        $S_1$& 1 & 0.5 & 0.7 &  0.8\\ \hline
        \rowcolor{pink!50} 
         $S_2$& 1 & 0.3 & 0.3 & 0.3 \\ \hline
         \rowcolor{lightgray!50} 
        $S_3$ & 0.9 & 0.5 &  0.6 & 0.7 \\ \hline
        \rowcolor{lightgray!50} 
         $S_4$ & 1 & 0.4 & 0.5 & 1.1 \\ \hline
        \end{tabular}}
    \end{table}

    \vspace{-0.6cm} 
    
     \begin{table}[H]
     \centering
    \scalebox{0.85}{ 
        \begin{tabular}{|c|c|c|c|}
        \hline
         & $A$ &$B$ &$C$\\ \hline
         \rowcolor{LimeGreen!50} 
        $\mu_1$& 0.2  & 0.3 &  0.5\\ \hline
        \rowcolor{pink!50} 
         $\mu_2$& / & / & / \\ \hline
         \rowcolor{lightgray!50} 
        $\mu_3$ & 0.2 &  0.3 & 0.4 \\ \hline
        \rowcolor{lightgray!50} 
         $\mu_4$ & - 0.1 & 0.5 & 0.6 \\ \hline
        \end{tabular}}
    \end{table}
  \end{minipage}
  \caption{Left: three data points ($A,B,C$) and four ranges. Right Top: predictions from  selectivity functions ($S_1 \sim S_4$) for the four ranges. Right Bottom: corresponding measures ($\mu_1 \sim \mu_4$) that induce $S_1 \sim S_4$, including their measure values on $A,B,C$ (technically, the three sets $\{A\}, \{B\}, \{C\}$).} \label{fig.pred_example}
\end{figure}
\begin{example} \label{example.measure}

\textcolor{black}{
Figure~\ref{fig.pred_example} (left) gives an illustration of three data points ($A, B, C$) and four possible ranges ($R_{AB}, R_{AC}, R_{BC}, R_{ABC}$), with four range functions and their selectivity predictions on the right. We also show the measures that induce the selectivity predictions of each range function (with their outputs on the three data points) in the bottom-right table. First, using Eq.~\ref{eq.selfunc2} and basic linear algebra, one can see that $S_1$ is induced by a proper probability measure (\textit{e.g.,} $\mu_1(A)=0.2, \mu_1(B)=0.3, \mu_1(C)=0.5$). However, this does not hold for the other selectivity functions. Specifically, $S_2$ does not satisfy \ref{cond.add} as $S_2(R_{AB}) + S_2(R_{AC})+ S_2(R_{BC}) \neq 2 \cdot S_2(R_{ABC})$, which indicates that $S_2$ cannot be induced by a probability measure or a signed measure. Additionally,  $S_3, S_4$ can only be induced from a signed measure ---  $S_3$ violates \ref{cond.one} and $S_4$ violates \ref{cond.pos} of a probability measure.}
\end{example} 

\noindent  \textcolor{black}{\textbf{Advanced Concepts.} We introduce here two 
 concepts in measure theory that will appear only in our proofs; readers may skip this subsection at first.} Given a signed measure $\mu$, the \emph{total variation} of $\mu$, denoted by $\abs{\mu}$, is defined by $\abs{\mu}(E)= \sup \sum_{n=1}^{\infty} \abs{\mu(E_n)}$ where the supremum is taken over all partitions of $E$, that is, over all countable unions $E = \bigcup_{n=1}^{\infty} E_n$, where the sets $E_n$ are disjoint and belong to $\set{M}$. {\color{black} Intuitively,  $\abs{\mu}$ measures how much $\mu$ "varies" in its domain, and one can show that the total variation $\abs{\mu}$ itself is a measure that dominates $\mu$ ($\abs{\mu} \geq \mu$)}.

A signed measure $\mu$ is \emph{absolutely continuous} w.r.t. the Lebesgue measure $m$ if $\mu(E)=0$ whenever $E \in \set{M}$ and $m(E)=0$. {\color{black} Absolute continuity can be interpreted as the \emph{smoothness} of a measure. It guarantees the existence of a \emph{signed density} $f:\cX \to \RR$ such that $\mu(E)=\int_{E} f(x) dx$ for any $E \in \set{M}$.}
Specifically, when $\mu$ is a probability measure, then $f$ must satisfy $f(x)\geq 0$ for any $x\in \cX$ and $\int_{\cX} f(x) dx = 1$.
See \cite{durrett2019probability} and \cite{stein} for details on measure theory.

\subsection{ML Models as Selectivity Predictors} \label{subsec: learning}
We formulate selectivity estimation as an ML problem. A learning algorithm  $\mathcal{A}$ learns a model $M$ to predict query selectivity from a training set $\mathcal{W} = \{z_i = (q_i, l_i)\}_{i=1}^n$, comprising observed queries/ranges and their selectivities. $M$ minimizes the mean of loss $\ell$ over the dataset, where $\ell$ can be defined as the squared error $(q_i, l_i)$: $\ell = (M(q_i) - l_i)^2$, or the absolute error: $\ell = |M(q_i) - l_i|$. Additionally, Qerror~\cite{moerkotte2009preventing} (max$(\frac{M(q_i)}{l_i}, \frac{l_i}{M(q_i)}$) and Squared Logarithmic Error (\textit{e.g.,} SLE = $(\log{M(q_i)} - \log{l_i})^2$, which is equivalent to optimizing Qerror), are prevalent in the literature as it better captures errors on selective queries.

\subsection{PAC Learning Framework} \label{subsec: PAC}
Probably Approximately Correct (PAC) learning~\cite{kearns1994introduction} is a framework for mathematical and rigorous analysis of \emph{in-distribution} (In-Dist) generalization in machine learning. We first intuitively explain the high-level idea of PAC learnability. Readers who prefer a simpler explanation may directly refer to Table~\ref{tab:term.summary} for an intuitive summary of key concepts used in the paper.

\smallskip
\noindent \textcolor{black}{\textbf{PAC Learnability.} Consider a learner $\mathcal{A}$ that receives training samples $\{z_i\}$ from an unknown distribution $Q(z)$ and picks a \textbf{hypothesis} (or a function) $h$ from a \textbf{hypothesis space} or \textbf{function family} $\mathcal{H}$ (\textit{i.e.,} a family/class of selectivity functions in our scenario). In the classical PAC framework, $\mathcal{A}$ is assumed to efficiently find the best $h$.
 We say a function family $\mathcal{H}$ is \textbf{learnable} if, given \textbf{enough training data}, then \textbf{with high probability} $(1-\delta)$, the chosen function $h \in \mathcal{H}$ will have \textbf{low error} (no more than $\epsilon$) on unseen data from $Q(z)$. Importantly, $\mathcal{A}$ must succeed for any distribution $Q(z)$ and any choice of $\delta$, $\epsilon$. We then introduce two key theoretical results that can \textit{determine} whether a function family $\mathcal{H}$ (including $\{0,1\}$-valued and real-valued functions) is PAC learnable.
\begin{enumerate}[leftmargin=*]
    \item A class of $\{0,1\}$-valued functions is PAC learnable \textbf{if and only if} its VC dimension (will be introduced later) is finite.
    \item A real-valued function class is $\gamma$-learnable \textbf{if and only if} its $\gamma$-fat dimension (will be introduced later)  is finite.
\end{enumerate}
In practice, finding the absolute best hypothesis in $\mathcal{H}$ can be challenging (\textit{i.e.,} such efficient learner $\mathcal{A}$ may not exist); sometimes we only need a hypothesis whose true error is small, even if it is not the best. Fortunately, \textbf{uniform convergence} theory~\cite{alon1997scale} (or specifically, Chernoff bound ~\cite{hellman1970probability}) says that if $\mathcal{H}$ has a finite VC dimension (or $\gamma$-fat dimension), then with a \textbf{sufficiently large} training set, \textbf{every} hypothesis $h \in \mathcal{H}$ will have its \textbf{empirical error} close to its \textbf{true error} with high probability. Thus, we do not need the absolute best function; “what you see is what you get.” As a result, if 1) $\mathcal{H}$ is learnable and 2) our optimization finds a hypothesis $h \in \mathcal{H}$ with small training error, it follows that $h$ will have small true error. This paper adopts this definition of learnability.
}

\textcolor{black}{Below is a short introduction to the \emph{VC dimension} and the \emph{fat-shattering dimension}, which are measures of complexity for classification and real-valued function classes, respectively. We note that these are abstract mathematical concepts, and giving a fully rigorous treatment would exceed the scope of this paper. Since they appear only in our in-distribution generalization error theorem (Thm~\ref{theorem.remove.validity}) and our paper primarily focuses on OOD generalization, first-time readers may skip the two definitions if desired.}

\smallskip
\noindent\textcolor{black}{\textbf{Vapnik–Chervonenkis (VC) Dimension.}
 A function family $\mathcal{H}$ \textbf{shatters} a set of points if, for \textbf{every possible} way to assign $0/1$ labels to those points, $\mathcal{H}$ contains at least one function that matches those labels exactly. The \textbf{VC dimension} is the size of the largest set of points that can be shattered by $\mathcal{H}$.  
We also define the VC dimension \vcd{$\Sigma$} of a range space $\Sigma$ to be the size of the largest subset of $\mathcal{X}$ that can be shattered by $\Sigma$. The VC dimension of a range space of $d$-dimensional hyper-rectangles is 2$d$~\cite{kearns1994introduction}. }

\noindent\textcolor{black}{\textbf{Fat-Shattering Dimension.}
To handle \textbf{real-valued functions} (\textit{e.g.,} our selectivity functions), we use the \textbf{fat-shattering dimension}~\cite{kearns1994efficient}, which extends the VC dimension idea. Informally, a set of points is ``$\gamma$-shattered'' if the function class $\mathcal{H}$ can position those points \textbf{above or below some target values} by at least $\gamma$, matching any desired ``above/below'' pattern. The \textbf{$\gamma$-fat dimension} is how many points can be arranged this way.
We define the $\gamma$-fat shattering dimension \mbox{$\textrm{fat}_{{\mathcal{H}}}(\gamma)$} to be the size of the largest subset of $\mathcal{X}$ that can be $\gamma$-shattered by $\mathcal{H}$.}

\smallskip


\begin{table}
 \caption{Key concepts and their intuitive explanations.}
    \centering
    \scalebox{0.93}{
        \begin{tabularx}{0.5\textwidth}{c|X}
        \hline
        \textbf{Concept} &  \textbf{Intuitive Explanation}  \\
        \hline
      {\small Generalization}  & \footnotesize The model's capability to perform well on unseen queries that are \emph{not} in the training workload. \textcolor{black}{We can predict the outcomes on unseen data based only on training samples.}\\
       \hline
      \begin{minipage}[t]{0.27\linewidth} \small{Learnability/ In-Dist} \\ {\small Generalization} \end{minipage}  & \footnotesize Given sufficient training queries,  the model's true error on unseen queries drawn from the \emph{same distribution} with training queries is close to the training error.\\
        \hline
 \small{OOD Generalization} & \footnotesize  Given sufficient training queries, the model's true error on unseen queries drawn from a \emph{different distribution} from the training set is close to the training error. \\
        \hline
    \end{tabularx}}
    \label{tab:term.summary}
\end{table}

\noindent \textbf{Limitation of the PAC Learning Framework.} While PAC learnability  can be used to quantify the generalization error for hypothesis spaces with finite VC (or fat-shattering) dimension, they are applicable \emph{solely} to \emph{in-distribution generalization} where both training and test queries are drawn from the same distribution $Q(z)$. 


\vspace{-0.5em}
\subsection{Existing Theoretical Results ~\label{section.review}}
For self-containment, in this section, we briefly review the main learnability results of selectivity functions from the literature~\cite{hu2022selectivity}, and point out the important assumption made by the paper.

\noindent \textbf{Overview of~\cite{hu2022selectivity}.} Since selectivity functions are real-valued, to prove their learnability it suffices to show that their fat-shattering dimension is bounded. Using the same terminology in ~\cite{hu2022selectivity}, we cite the main Lemma ~\cite{hu2022selectivity}.

\begin{lemma} \label{lemma.k}
Consider a range space $\Sigma = (\mathcal{X}, \bm{\mathcal{R}})$ and the hypothesis class $\bm{\mathcal{S}}$ of range functions over input query ranges $R \in \bm{\mathcal{R}}$. For any $\gamma \in (0,1/2)$, the $\gamma$-fat shattering dimension of \set{S} is $\tilde{O}(\frac{1}{\gamma^{\lambda+1}})$\footnote{$\tilde{O}(\cdot)$ hides {polylogarihm dependencies on $1/\gamma$ for constant $\lambda$} }, where $\lambda$ is the \vcd{$\Sigma$} of the range space.
\end{lemma}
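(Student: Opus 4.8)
The plan is to bound the largest integer $m$ for which some family of ranges $R_1,\dots,R_m \in \set{R}$ can be $\gamma$-fat-shattered by $\set{S}$, and to show that this $m$, which by definition equals \fat, is $\tilde{O}(1/\gamma^{\lambda+1})$. By the definition of $\gamma$-shattering, such a family comes equipped with a witness $\sigma$ and, for each of the $2^m$ subsets $E \subseteq \cbr{R_1,\dots,R_m}$, a selectivity function $\hat{S}_E \in \set{S}$ — equivalently a probability measure $\mu_E$ — such that $\mu_E(R_i) \ge \sigma(R_i)+\gamma$ when $R_i \in E$ and $\mu_E(R_i) \le \sigma(R_i)-\gamma$ otherwise. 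Since the two cases are separated by a gap of $2\gamma$, the whole argument reduces to a complexity/counting statement: I will show that, \emph{at resolution $\gamma$}, probability measures can only realize a bounded number of distinguishable value profiles on the $R_i$, and then force that number to be at least $2^m$.

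The core difficulty is that a probability measure is an infinite-dimensional object, so a priori $\set{S}$ has unbounded complexity and the margin $\gamma$ is essential. I would therefore build a two-level discretization whose total description length is controlled by \vcd{\Sigma}. First, discretize the \emph{space}: using the $\epsilon$-net/approximation machinery for VC classes, I would summarize each $\mu_E$ by its mass on a partition induced by a small net (of size $\tilde{O}(1/\gamma)$), so that for \emph{every} range $R$ the summary determines $\mu_E(R)$ up to an additive $O(\gamma)$. By Sauer–Shelah, the number of \emph{distinguishable} ranges at this resolution is only $\tilde{O}(1/\gamma^{\lambda})$, since restricting a range space of VC dimension $\lambda$ to a net of size $\tilde{O}(1/\gamma)$ yields at most that many distinct traces. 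Second, discretize the \emph{mass/value axis} into $\flr{1/\gamma}$ levels. Crucially, it is here that the probability-measure hypothesis does the work: positivity and the sum-to-unity constraint are exactly what let me express each $\mu$ as a convex average of point-mass incidence vectors and thereby encode it, at scale $\gamma$, by finitely many numbers whose joint description length is $\tilde{O}(1/\gamma^{\lambda+1})$ — the factor $1/\gamma^{\lambda}$ coming from the distinguishable ranges and the extra $1/\gamma$ from the value levels.

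Finally I would convert this effective-dimension bound into the fat-shattering bound. A $\gamma$-fat-shattered family forces a set of $2^m$ pairwise $\gamma$-separated selectivity functions (a packing), whereas the discretization above produces a $\gamma$-cover of $\set{S}$ of log-size $K = \tilde{O}(1/\gamma^{\lambda+1})$; since a $\gamma$-cover caps the size of any $\gamma$-packing, taking logarithms gives $m \le O(K \log m)$, and the standard inversion $x \le a\log x \Rightarrow x = \tilde{O}(a)$ yields $\fat = m = \tilde{O}(1/\gamma^{\lambda+1})$. The main obstacle is the middle step: I must construct the resolution-$\gamma$ summary so that it \emph{simultaneously} (i) determines $\mu(R)$ up to $\gamma$ \emph{uniformly over all ranges}, and (ii) has description length exactly of order $1/\gamma^{\lambda+1}$, which requires careful error bookkeeping to keep the two discretizations from compounding and to pin the space count at the $\lambda$-th Sauer–Shelah order rather than at the (exponential-in-$\lambda$) dual VC dimension. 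This is also precisely the place where dropping positivity and normalization — the signed-measure setting treated later in the paper — breaks the convex-average encoding and demands a genuinely new analysis.
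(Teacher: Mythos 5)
Your outer shell --- converting the $2^m$ shattering functions into a $2\gamma$-separated set in $\ell_\infty$ over the shattered ranges and capping it by a covering number --- is a genuinely different route from the one behind this lemma. The cited proof (reproduced in generalized form in the paper's proof of Theorem~\ref{theorem.remove.validity}) instead buckets the shattered ranges by their witness values into $O(1/\gamma)$ groups, orders each group of size $k$ along a path for which $\EE_{x \sim D}\sum_i \1(x \in R_i \oplus R_{i+1}) = O(k^{1-1/\lambda}\log k)$ for \emph{every} distribution $D$, and plays the telescoping lower bound $(k-1)\gamma$ coming from the alternating subset against this upper bound to get $k = \tilde{O}(\gamma^{-\lambda})$ per bucket; the exponent $\lambda+1$ is $\lambda$ from that inversion plus $1$ from the number of witness buckets. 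It does not arise from any count of ``distinguishable ranges times value levels.''

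The middle step of your argument, as written, has concrete gaps. First, an $\epsilon$-net of size $\tilde{O}(1/\gamma)$ only certifies that ranges of measure at least $\gamma$ are hit; it does not determine $\mu(R)$ to within $O(\gamma)$. For that you need an $\epsilon$-\emph{approximation}, whose size is $\tilde{O}(\lambda/\gamma^{2})$, and the object that uniformly approximates $\mu(R)$ is the empirical measure of that point set itself, not ``the mass of $\mu$ on a partition induced by a net'' (a cell of such a partition can straddle $\partial R$ while carrying constant mass, so the summary you describe does not control $\mu(R)$). Second, the entropy accounting is not valid: specifying one of $\lfloor 1/\gamma\rfloor$ levels for each of $D = \tilde{O}(\gamma^{-\lambda})$ distinguishable ranges costs $D\log(1/\gamma)$ bits, not $D\cdot(1/\gamma)$, and the values on different ranges are tied together by additivity so they cannot be assigned independently in the first place; the product ``$\gamma^{-\lambda}\times\gamma^{-1}$'' therefore does not count anything, and the exponent $\lambda+1$ is asserted rather than derived. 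If you execute the covering route correctly --- count multisets of $\tilde{O}(\lambda/\gamma^{2})$ points distributed among the atoms of the arrangement of the $m$ shattered ranges (at most $O(m^{\lambda^*})$ atoms, $\lambda^*$ the dual VC dimension) --- you obtain $\log N_\infty = \tilde{O}(\gamma^{-2}\log m)$ and hence a fat-shattering bound of $\tilde{O}(\gamma^{-2})$, which does imply the stated $\tilde{O}(\gamma^{-(\lambda+1)})$ for $\lambda\ge 1$, but via a computation different from the one you gave and with constants depending on $\lambda^*$; as written, your key step would not survive scrutiny.
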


 \vspace{-0.5em} 
\noindent \textbf {Assumption.}
Note that the proof in~\cite{hu2022selectivity} relies on an important condition on the hypothesis class: every range function $S \in \set{S}$ is induced by a \emph{probability measure} via ~(\ref{eq.selfunc2}). 


\subsection{\mbox{The Gap Between Theory and Practice}~\label{section.gap}}
Recall from Section~\ref{section.prior} that there are four categories of query-driven approaches for learning selectivity functions: (1) linear and polynomial parametric functions (PFs), (2) histograms built from queries, (3) LEO, which can be seen as a combination of parametric functions and histograms, (4) deep learning (DL) models such as Multi-Set Convolutional Network (MSCN)~\cite{kipf2019learned}. Among them, deep learning models achieve the best practical performance. 
\textcolor{black}{{In this section, we theoretically analyze whether the probability measure assumption holds for these methods. We also conducted empirical experiments to verify our results; but we omit them here due to space constraints.}}

\noindent \textbf{Two necessary conditions.} One can show that if a learned selectivity function  $\hat{S}(R)$ (by a selectivity estimation model) is induced by a probability measure, it must satisfy \textbf{finite additivity} as well as \textbf{monotonicity} defined as follows.

\begin{itemize} [leftmargin=*]
  \item \textbf{Finite Additivity.} Implication of \ref{cond.add}, defined in \S~\ref{subsec: sgn_m}.
  
  \item \textbf{Monotonicity.} Let $R_1$ and $R_2$ be two union-compatible ranges over schema $\Sigma$, such that $R_1 \subseteq R_2$ for any instance of $\Sigma$.  We refer to this as a case of \emph{query containment}~\cite{abiteboul1995foundations}. Then finite additivity and positivity (\ref{cond.pos}) imply that
  $\hat{S}(R_1) \leq \hat{S}(R_2)$.

\end{itemize}

{\color{black}
\noindent \textbf{Theoretical Characteristics.} First, histograms satisfy both monotonicity and additivity \emph{by construction}. In contrast, PFs can violate both monotonicity and additivity due to \emph{negative parameters} and \emph{non-linear mappings} in the input and output of polynomial functions, respectively.  For LEO, while it maintains the additivity of histograms through the \emph{piecewise linear} form of its adjustment ratio $r$, it fails to ensure monotonicity because $r$ is not strictly increasing. This non-monotonicity means LEO is not derived from a probability measure. Like PFs, DL models also break both monotonicity and additivity due to \emph{negative weights} and \emph{non-linear mappings}. We summarize the analysis results in Table~\ref{table.check}.

Hence, unlike histograms (\textit{i.e.,} data models built from queries), other three regression-like approaches learn a direct mapping from query ranges to selectivities and are not guaranteed to be induced by probability measures. Therefore, they (including the best-performing deep learning models) do \emph{not} enjoy the theoretical results in \S~\ref{section.review}.
}

\begin{table}
\centering
 \caption{Theoretical Characteristics of query-driven methods.\label{table.check}}
 \vspace{-1em} 
\scalebox{1}{
\begin{tabular}{|c|c|c|c|c|}
  \hline
    & \makecell{PFs} & Histograms & LEO & \makecell{DL Models}  \\
  \hline
   Monotonicity & \ding{55} & \checkmark & \ding{55} & \ding{55} \\
  \hline
  Additivity & \ding{55} & \checkmark &  \textcolor{black}{\checkmark}  & \ding{55} \\
  \hline
\end{tabular}
}
\end{table}
 
\noindent \textbf{Problem Definition}.
We have shown that the selectivity functions learned by most query-driven models are not induced by probability measures, rendering the learnability results from previous work~\cite{hu2022selectivity} inapplicable. Despite this, these models, such as MSCN, exhibit impressive practical performance, outperforming histograms on several benchmarks~\cite{kipf2019learned}. Additionally, the PAC learning framework fails to characterize generalization error for OOD test workloads, which are prevalent in real-world scenarios.
Therefore, this paper aims to bridge the gap between theory and practice by 
 \circled{1}  relaxing the restrictions on the hypothesis class and deriving the corresponding PAC learnability results (\textbf{Goal 1}); 
 \circled{2}  exploring OOD generalization error beyond the PAC learning framework (\textbf{Goal 2});
 \circled{3} leveraging the theoretical results to design new strategies for improving existing selectivity learning models (\textbf{Goal 3}).

\vspace{-0.5em}
\section{A New Generalization Theory~\label{section.theory}}
In this section, we propose a new generalization theory that addresses the first two goals of the Problem Definition. Note that the proofs in \S~\ref{section.validity} and \S~\ref{section.ood} require advanced knowledge of measure theory and probability theory introduced in \S~\ref{section.preliminaries}. \textbf{Readers who prefer a simpler explanation may refer to \S~\ref{section.theory.summary}.}

\subsection{Learnability Under Signed Measures~\label{section.validity}}
We first demonstrate the learnability of the class of selectivity predictors induced by signed measures (\textit{i.e.,} removing restrictions \ref{cond.pos} and \ref{cond.one}). 
The results will be applied to \neucdf and LEO in Section ~\ref{section.neucdf} after showing that their hypothesis classes are indeed induced by signed measures.
 \subsubsection{\textbf{Learnability}}
 Given a range space $\Sigma = (\mathcal{X}, \set{R})$, let $\set{S}_{\sign}$ denote the hypothesis class that consists of all functions $\hat{S}:\set{R} \to \RR$ that are induced by signed measures absolutely continuous with respect to the Lebesgue measure. Recall the definition of $\mu_{\hS}$ in Section ~\ref{subsec: sgn_m}, and define the hypothesis class $\set{S}_{\sign}(C)$ for any $C \geq 0$ as follows. 
 \begin{equation*}
     \set{S}_{\sign}(C):= \cbr{\hat{S} \in \set{S}_{\sign}: \abs{\mu_{\hat{S}}} \leq C  }
 \end{equation*}
\noindent\fbox{\begin{minipage}{0.46\textwidth}
\begin{theorem} [\textbf{In-Distribution Generalization Error Bound}]  \label{theorem.remove.validity}
If \vcd{$\Sigma$} $= \lambda$ where $\lambda$ is some constant, then the fat-shattering dimension of $\set{S}_{\sign}(C)$ is finite and satisfies:
\begin{equation}
    \fatt{\set{S}_{\sign}(C)} = \tilde{O} \rbr{C\cdot(1/\gamma)^{\lambda+1}}
\end{equation}

\end{theorem}
\end{minipage}}

\smallskip

Theorem \ref{theorem.remove.validity} implies that $\set{S}_{\sign}(C)$ is $\epsilon$-learnable with a polynomial number of training queries in $\frac{1}{\epsilon}$ and $\log \frac{1}{\delta}$, following the arguments in~\cite{hu2022selectivity, bartlett1995more}. By \emph{uniform converge} theory, it means that given a sufficient number of training queries,  the training and true errors remain \emph{close}, \textit{i.e.,} $\rvert \er(\hat{S})-\er^{\text{train}}(\hat{S}) \rvert$ is \emph{bounded}.


\subsubsection{\textbf{Proof of Theorem~\ref{theorem.remove.validity}}}
Without loss of generality, assume $C=1$ since the general case follows from scaling. Set $\set{S}:=\set{S}_{\sign}(1)$ to be the hypothesis class. Following \cite{hu2022selectivity}, let $\cT \subset \set{R}$ be a subset $\gamma$-shattered by $\set{S}$ and partition $\cT$ based on the values of witnesses $\sigma(R)$:
\begin{equation*}
    \mathcal{T}_j= \cbr{R \in \mathcal{T}: \sigma(R) \in[(j-1) \cdot \gamma, j \cdot \gamma]}
\end{equation*}
for $j= -\ceil{1/\gamma}, -\ceil{1/\gamma}+1,...,0,...,\ceil{1/\gamma}-1, \ceil{1/\gamma}$. Let $k_j:= \abs{\cT_j}$.

First, Lemma 2.4 in \cite{hu2022selectivity} implies that there is an ordering of ranges in $\cT_j$, denoted by $\pi_j = \la R_1,...,R_{k_j} \ra$, such that for any probability distribution $D$ on $\cX$, we have
\begin{equation} \label{eqn: anyD}
    \EE_{x \sim D} I_x = O \rbr{k_j^{1-1/\lambda} \log k_j}
\end{equation}
where $I_x=\sum_{i=1}^{k_j-1}I_{i,x}$ and $I_{i,x}=\1(x \in R_{i}\oplus R_{i+1})$, $\oplus$ being the set symmetric difference.

Next, define the subset $E_j = \cbr{R_{2i} \mid 1\leq i \leq \flr{k_j/2}}$.
One can check that Lemma 2.2 in \cite{hu2022selectivity} still holds and ensures the existence of some $\hS_j \in \set{S}$ such that for any pair $R \in E_j$ and $R' \in \cT_j \setminus E_j$, we have
\begin{equation} \label{eq: gap_gamma}
    \hS_j(R) - \hS_j(R') > \gamma 
\end{equation}
With $\hS_j$ in hand, we define $\Delta_j$ according to whether $k_j$ is odd or even as follows.

\noindent \textit{$k_j$ is odd:}
\begin{align*}
    \Delta_j := & (\hS_j(R_2) - \hS_j(R_1)) + (\hS_j(R_2) - \hS_j(R_3)) + \cdots + \\
    & (\hS_j(R_{k_j-1}) - \hS_j(R_{k_j-2})) + (\hS_j(R_{k_j-1}) - \hS_j(R_{k_j}))
\end{align*}
\noindent \textit{$k_j$ is even:}
 \begin{align*}
    \Delta_j := &(\hS_j(R_2) - \hS_j(R_1)) + (\hS_j(R_2) - \hS_j(R_3)) + \cdots +\\
    &(\hS_j(R_{k_j-2}) - \hS_j(R_{k_j-3})) + (\hS_j(R_{k_j-2}) - \hS_j(R_{k_j-1})) +\\
    &(\hS_j(R_{k_j}) - \hS_j(R_{k_j-1})) 
\end{align*}
By definition of $\Delta_j$ above and (\ref{eq: gap_gamma}), one has
\begin{equation} \label{eqn: Del_low}
    \Delta_j \geq (k_j-1) \gamma 
\end{equation}

Since $\hS_j$ is induced by a signed measure $\mu_{\hS_j}$, denote by $\hf_j$ the signed density of $\mu_{\hS_j}$. Then one can show that $|\hf_j|$ is the density of $|\mu_{\hS_j}|$ and that $|\hf_j| / (|\mu_{\hS_j}|(\cX))) $ is the density of the probability measure $|\mu_{\hS_j}| / (|\mu_{\hS_j}|(\cX)))$ on $\cX$. Therefore, one can obtain
\begin{align}
    \Delta_j &\leq \int_{\cX} \abs{\hat{f}_j(x)} I_x dx \nonumber\\
    &= \abs{\mu_{\hS_j}}(\cX) \cdot \int_{\cX} \frac{\abs{\hat{f}_j(x)}}{\abs{\mu_{\hS_j}} (\cX)} \cdot I_x dx \nonumber\\
    &\overset{(i)}{=}  O \rbr{k_j^{1-1/\lambda} \log k_j} \label{eqn: del_upp}
\end{align}
Here (i) follows from (\ref{eqn: anyD}) and the assumption that $C=1$.

Finally, similar to \cite{hu2022selectivity}, one can combine (\ref{eqn: Del_low}) and (\ref{eqn: del_upp}) to show that $k_j = \tilde{O} \rbr{ (1/\gamma)^{\lambda}}$ and $|\cT| = \tilde{O} \rbr{ (1/\gamma)^{\lambda+1}}$. The proof is then complete.

\subsubsection{\textbf{Remark}} Although Thm~\ref{theorem.remove.validity} is a natural extension of prior work~\cite{hu2022selectivity}, it is crucial for developing a \emph{practical} theory. It applies to a broader array of selectivity predictors beyond the probability measures used previously~\cite{hu2022selectivity} (will be introduced in~\S \ref{section.neucdf}). Additionally, as will be presented in ~\S \ref{section.ood}, under mild assumptions, these predictors have bounded OOD generalization errors. This means we can predict their performance even when the test workload comes from a different distribution than the training workload, a common scenario in practice. \textbf{More importantly, proving OOD generalization is challenging, as it falls \emph{outside} the scope of the PAC learning framework. Therefore, existing results (\textit{e.g.,} fat-shattering dimension and results in ~\cite{hu2022selectivity}) within the PAC learning framework \emph{cannot} be reused.}

\subsection{OOD Generalization Error~\label{section.ood}}

In this section, we target the second goal in Problem Definition --- OOD generalization error beyond the PAC learning framework. \textbf{The main results appear in the callout for Theorem \ref{thm: ood}}.

The theorem shows that under the realizable assumption, 
 a predictor $\hat{S}$ trained with $n$ i.i.d. samples from a training distribution $Q$ to $(\epsilon, \delta)$-learn will have its generalization error on a different testing distribution $P$ bounded above by $C \sqrt{\epsilon}$ with probability at least $1 - \delta$, provided that Assumptions \ref{assump: bdd} through \ref{assump: samp_R} (introduced later in \S ~\ref{subsec: ood_theory}) hold.
As will be introduced in \S~\ref{section.neucdf}, this result will theoretically demonstrate the potential advantage of modeling CDFs over selectivities in terms of out-of-distribution generalization error.

The assumptions are relatively mild. Intuitively, Assumption \ref{assump: bdd} requires only that $\hat{S}$ is bounded and that the \emph{densities exist}; Assumption \ref{assump: samp_x} stipulates that the region covered by the testing distribution $P$ must be \emph{contained within} the region covered by the training distribution $Q$; and Assumption \ref{assump: samp_R} essentially requires sufficient \emph{diversity} in the training ranges.

\subsubsection{\textbf{Main Theoretical Results}}
\label{subsec: ood_theory}
It is important to note that one cannot expect an algorithm trained on a distribution $Q$ to generalize well to an arbitrary testing distribution ~$P$. To ensure provable and robust generalization, we impose the following assumptions.

\begin{assumption} \label{assump: bdd}
    The learned selectivity $\hat{S}$ is bounded such that there exists a constant $C_1$ for which $\abs{\hat{S}(R) } \leq C_1 $ for any $R \in \set{R}$. 
\end{assumption}
 
Before proceeding, we introduce some additional notations. Let $\set{Z} = \set{R} \times \RR$. We use $Q$ and $P$ to denote the training and testing distribution of $Z=(R,W) \in \set{Z}$, respectively. Given the training distribution $Q$, let $Q_R$ be the \emph{marginal distribution} of $R$ and define $\mathcal{X}_Q:= \bigcup_{R \in \supp Q_R} R$, which is a subset of $\cX$. The marginal distribution $P_R$ and the set $\mathcal{X}_{P}$ are defined similarly for the testing distribution.
We now introduce Assumption \ref{assump: samp_x} and \ref{assump: samp_R}.

\begin{assumption} \label{assump: samp_x}
    There exists a constant $ C_2$ such that the marginal training and testing distributions $Q_R$ and $P_R$  satisfy
    \begin{align*}
        P_{R \sim P_R} \sbr{x \in R} \leq C_2 \cdot  P_{R \sim Q_R} \sbr{x \in R}, \quad \forall x \in \mathcal{X}.
    \end{align*}
\end{assumption}

\noindent\textit{Remark.} Assumption \ref{assump: samp_x} requires that the probability $P_{R \sim P_R} \sbr{x \in R}$ (the likelihood of $x$ being sampled during testing) is upper-bounded by the probability $P_{R \sim Q_R} \sbr{x \in R}$ (the likelihood of $x$ being sampled during training) multiplied by a constant $C_2$. This implies that $\cX_P \subset \cX_Q$. The rationale is that if $\cX_P$ includes some $x$ that is not covered by any range during training, then one cannot expect to learn the selectivity around $x$ accurately.

\begin{assumption} \label{assump: samp_R}
    The true $S_D$ and the learned selectivity $\hS$ are induced by signed measures that are absolutely continuous, with corresponding signed densities $f_D,\hf$. Additionally,
    there exists a constant $ c_3>0 $ such that $Q_R$ and the signed density $\hat{f}$ satisfy
    \begin{align}
        &\EE_{R \sim Q_R} \abs{\int_{\cX} \rbr{\hat{f}(x)-f_D(x) } \1(x \in R)dx } \nonumber\\
        \geq c_3 \cdot &\EE_{R \sim Q_R} \int_{\cX} \abs{ \hat{f}(x)-f_D(x) } \1(x \in R)dx  \nonumber
    \end{align}
\end{assumption}

\noindent\textit{Remark.} Assumption \ref{assump: samp_R} presupposes the validity of interchanging the order of integration and the absolute value. Intuitively, it ensures that $Q_R$ covers a diverse set of ranges rather than focusing on ranges where the error $\hat{S}(R) - S_D(R)$ happens to be relatively small. A simple example illustrating a situation where Assumption \ref{assump: samp_R} holds is provided below in Example \ref{ex}.

\begin{example} \label{ex}
    For $\mathcal{X} = [-1/2, 1/2]$, suppose the densities $f_D$ and $\hat{f}$ are defined as $f_D(x) = 1$ and $\hat{f}(x) = 1 + 2\delta_n x$, where $\delta_n$ is a parameter that quantifies how well $\hat{f}$ approximates $f_D$. If $Q_R$ is uniformly distributed over intervals of length $1/4$ with centers located within the range $[-3/8, 3/8]$, then it can be verified by direct computation that Assumption \ref{assump: samp_R} is satisfied with $c_3 = 1/2$ for any value of $\delta_n$.
\end{example}

\noindent Now, we are ready to present our OOD generalization error bound:

\medskip

\noindent\fbox{\begin{minipage}{0.46\textwidth}
\begin{theorem} [\textbf{OOD Generalization Error Bound}] \label{thm: ood} 
 Suppose Assumption \ref{assump: bdd}-\ref{assump: samp_R} hold. In addition, if the in-distribution generalization error of $\hat{S}$ can be bounded by
    \begin{align}
        P_{Z_1^n \sim Q^{\otimes n}} \sbr{\er_Q(\hat{S}) < \epsilon } \geq 1-\delta
        \label{eqn: ood_in_samp_err}
    \end{align}
    then the out-of-distribution generalization error $\er_P(\hat{S})$ satisfies
    \begin{align}
        P_{Z_1^n \sim Q^{\otimes n}} \sbr{\er_P(\hat{S}) < \frac{(C_1 +1)C_2}{c_3} \sqrt{\epsilon} } \geq 1-\delta
        \label{eqn: ood_out_samp_err}
    \end{align}
\end{theorem}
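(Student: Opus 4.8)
The plan is to establish a purely deterministic chain of inequalities showing that, on the event $\cbr{\er_Q(\hS) < \epsilon}$, one automatically has $\er_P(\hS) < \frac{(C_1+1)C_2}{c_3}\sqrt{\epsilon}$; since hypothesis (\ref{eqn: ood_in_samp_err}) guarantees this event occurs with probability at least $1-\delta$ over the draw $Z_1^n \sim Q^{\otimes n}$, the conclusion (\ref{eqn: ood_out_samp_err}) follows immediately, and the entire argument reduces to the deterministic implication. Throughout I would write $g := \hf - f_D$ for the signed density difference supplied by Assumption \ref{assump: samp_R}, and use the key identity $\hS(R) - S_D(R) = \int_{\cX} g(x)\,\1(x \in R)\,dx$, which holds because both $\hS$ and $S_D$ are induced by absolutely continuous signed measures. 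I interpret $\er_Q$ and $\er_P$ as the squared errors $\EE_{R \sim Q_R}(\hS(R)-S_D(R))^2$ and $\EE_{R \sim P_R}(\hS(R)-S_D(R))^2$ under the realizable assumption, where the labels coincide with $S_D$.

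The chain proceeds in five moves. First, since $\abs{\hS(R)} \le C_1$ by Assumption \ref{assump: bdd} and the true selectivity satisfies $S_D(R) \in [0,1]$, the pointwise bound $\abs{\hS(R)-S_D(R)} \le C_1+1$ gives $(\hS(R)-S_D(R))^2 \le (C_1+1)\abs{\hS(R)-S_D(R)}$, hence $\er_P(\hS) \le (C_1+1)\,\EE_{R\sim P_R}\abs{\hS(R)-S_D(R)}$; this is where the factor $C_1+1$ enters, downgrading the $L^2$ test error to an $L^1$ quantity. Second, the triangle inequality applied to the density representation yields $\abs{\hS(R)-S_D(R)} \le \int_{\cX}\abs{g(x)}\,\1(x\in R)\,dx$. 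Third, Fubini/Tonelli interchanges the $P_R$-expectation with the integral over $\cX$, turning the bound into $\int_{\cX}\abs{g(x)}\,P_{R\sim P_R}\sbr{x\in R}\,dx$; Assumption \ref{assump: samp_x} bounds the testing coverage by $C_2$ times the training coverage, and interchanging back gives $C_2\,\EE_{R\sim Q_R}\int_{\cX}\abs{g(x)}\,\1(x\in R)\,dx$. Fourth, Assumption \ref{assump: samp_R} is invoked to trade the integrated absolute density for the absolute selectivity error, $\EE_{R\sim Q_R}\int_{\cX}\abs{g(x)}\,\1(x\in R)\,dx \le \tfrac{1}{c_3}\,\EE_{R\sim Q_R}\abs{\hS(R)-S_D(R)}$. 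Fifth, Jensen's inequality (equivalently Cauchy--Schwarz) upgrades the $L^1$ training error back to $L^2$: $\EE_{R\sim Q_R}\abs{\hS(R)-S_D(R)} \le \sqrt{\EE_{R\sim Q_R}(\hS(R)-S_D(R))^2} = \sqrt{\er_Q(\hS)} < \sqrt{\epsilon}$. Composing the five moves produces $\er_P(\hS) < \frac{(C_1+1)C_2}{c_3}\sqrt{\epsilon}$.

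The step I expect to be the crux is the fourth one, the use of Assumption \ref{assump: samp_R}. Every other inequality is a standard tool (boundedness, triangle inequality, Fubini, Jensen), and in particular the naive triangle bound only ever gives $\abs{\int_{\cX} g(x)\,\1(x\in R)\,dx} \le \int_{\cX}\abs{g(x)}\,\1(x\in R)\,dx$, which runs the \emph{wrong} way for our purposes: we need to climb back from the $L^1$ norm of the unobservable density error to the observable selectivity error $\abs{\hS(R)-S_D(R)} = \abs{\int_{\cX} g(x)\,\1(x\in R)\,dx}$. Assumption \ref{assump: samp_R} is exactly the structural condition enabling this reverse passage (with loss factor $1/c_3$), encoding that the training ranges are diverse enough that cancellation inside $\int_{\cX} g\,\1$ cannot systematically mask the density error. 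The secondary subtlety is merely tracking which error is measured in $L^1$ versus $L^2$ at each stage: the asymmetry of downgrading $L^2 \to L^1$ under $P$ (move one) while upgrading $L^1 \to L^2$ under $Q$ (move five) is precisely what yields both the $C_1+1$ prefactor and the $\sqrt{\epsilon}$ rate. I would also note that the Fubini interchange is justified because $g$ is integrable (the measures are finite by Assumption \ref{assump: bdd}) and the integrand is nonnegative.
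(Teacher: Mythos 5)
Your proposal is correct and follows essentially the same route as the paper's proof: the identical five-step chain of boundedness ($C_1+1$ factor), density representation with the triangle inequality, Fubini plus Assumption \ref{assump: samp_x} to change measure from $P_R$ to $Q_R$, Assumption \ref{assump: samp_R} to recover the selectivity error at cost $1/c_3$, and Cauchy--Schwarz to reach $\sqrt{\er_Q(\hat{S})}$. Your identification of the Assumption \ref{assump: samp_R} step as the crux, and of the $L^2\!\to\!L^1\!\to\!L^2$ asymmetry as the source of the $\sqrt{\epsilon}$ rate, matches the paper's own reading of the argument.
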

\end{minipage}}

\medskip
Informally, the Theorem states that if Assumptions \ref{assump: bdd}-\ref{assump: samp_R} hold and $\hat{S}$ achieves bounded in-distribution generalization error, then $\hat{S}$ will also have bounded out-of-distribution generalization error.

We then present the full proof. To better understand the proof, consider the following sequence of inequalities:
\begin{align*}
    \er_P(\hat{S}) &\overset{(a)}{\lesssim} \EE_{R \sim P_R} \abs{\hat{S}(R) - S_D(R)} \\
    &\overset{(b)}{\lesssim} \EE_{R \sim Q_R} \abs{\hat{S}(R) - S_D(R)}  \overset{(c)}{\leq} \sbr{\er_Q(\hat{S})}^{1/2}
\end{align*}
Here $a_n \lesssim b_n$ means $a_n = O (b_n)$. Step (a) follows from the upper-bound Assumption \ref{assump: bdd}; (b) involves a change of measure from $P_R$ to $Q_R$ and  connects through Assumptions \ref{assump: samp_x} and \ref{assump: samp_R}; and (c) is based on the  Cauchy–Schwarz inequality. The complete proof provides a detailed justification for each of these inequalities.
\begin{proof}
    To bound $\er_P(\hat{S})$ in (\ref{eqn: ood_out_samp_err}), note that one has
    \begin{align}
        \er_P(\hat{S}) &= \EE_{R \sim P_R} \rbr{\hat{S}(R) - S_D(R)}^2 \nonumber\\
        &\overset{(i)}{\leq} (C_1+1) \cdot \EE_{R \sim P_R} \abs{\hat{S}(R) - S_D(R)} \nonumber\\
        &\leq (C_1+1) \cdot \int_{\mathcal{X}} \abs{\hat{f}(x) - f_D(x) } \cdot \sbr{\EE_{R \sim P_R} \1 \rbr{x \in R}} \cdot dx \nonumber\\
        &\overset{(ii)}{\leq} (C_1+1) \cdot C_2 \underbrace{\int_{\mathcal{X}} \abs{\hat{f}(x) - f_D(x) } \cdot \sbr{\EE_{R \sim Q_R} \1 \rbr{x \in R}} \cdot dx }_{\mathbf{(I)}}
        \label{eqn: prf_ood_test_up}
    \end{align}
    Here (i) follows from Assumption \ref{assump: bdd}; (ii) is due to Assumption \ref{assump: samp_x}.

    Meanwhile, one can obtain that
    \begin{align}
        \mathbf{(I)} &\overset{(i)}{=} \EE_{R \sim Q_R} \int_{\cX} \abs{\hat{f}(x) - f_D(x) }\1 \rbr{x \in R} dx \nonumber \\
        &\overset{(ii)}{\leq} c_3^{-1} \EE_{R \sim Q_R} \abs{\int_{\cX} \rbr{\hat{f}(x) - f_D(x) }\1 \rbr{x \in R} dx} \nonumber \\
        &= c_3^{-1} \underbrace{\EE_{R \sim Q_R} \abs{\hat{S}(R) - S_D(R)}}_{\mathbf{(II)}}
        \label{eqn: prf_ood_abs_low}
    \end{align}
    
        
    Here (i) follows from the Fubini's theorem, and (ii) is a result of Assumption \ref{assump: samp_R}.

    Note that the Cauchy–Schwarz inequality implies that
    \begin{align*}
        \mathbf{(II)}\leq \sbr{\EE_{R \sim Q_R} \rbr{\hat{S}(R) - S_D(R)}^2}^{1/2} = \sbr{\er_Q(\hat{S})}^{1/2}
    \end{align*}
    Combine the above inequality with (\ref{eqn: ood_in_samp_err}) implies
    \begin{align}
        P \cbr{\mathbf{(II)} < \sqrt{\epsilon}} \geq 1-\delta
        \label{eqn: prf_ood_abs_up}
    \end{align}
    
        


    Finally, combining (\ref{eqn: prf_ood_test_up}), (\ref{eqn: prf_ood_abs_low}) and (\ref{eqn: prf_ood_abs_up}) gives (\ref{eqn: ood_out_samp_err}).
\end{proof}

\subsubsection{\textbf{OOD Scenarios}\label{section.ood.cases}}
We define three specific OOD scenarios
which naturally arise in real-world applications.


\noindent \textbf{Scenario 1: Query Center Move} refers to a shift in the predominant focus of queries, characterized by a change in the attribute values around which the queries are \emph{concentrated}.

\begin{example}[Center Move]
    $\cX=\RR$. Both training and test distribution $Q_R,P_R$ are supported on intervals of length $2$. For training distribution $Q_R$, the center of the interval is uniform on $[0,1]\cup[1,2]$ while for test distribution $P_R$, the center of the interval is uniform on $[1,2]$. One can check that Assumption \ref{assump: samp_x} holds with $C_2=2$.
\end{example}

\noindent \textbf{Scenario 2: Query Granularity Shift} refers to a change in the granularity of query selection predicates. Granularity pertains to the \emph{specificity} or \emph{broadness} of the data subsets accessed by queries.

\begin{example}[Granularity Shift]
    $\cX=\RR$. The training distribution $Q_R$ is supported on intervals of fixed length $1$ with center uniformly distributed on $[-2,3]$, while the test distribution $P_R$ is supported on intervals of fixed length $2$ with center uniformly distributed on $[0,1]$. One can check that Assumption \ref{assump: samp_x} holds with $C_2 = 5$.
\end{example}

{\color{black}
\noindent \textbf{Scenario 3: Query Structure Change.} When the join graph remains unchanged, adding or dropping predicates essentially changes the query granularity, therefore reducing to Scenario 2. Our theory does not support changes in the join graph as both Thm~\ref{theorem.remove.validity} and Thm~\ref{thm: ood} hold for each join graph \emph{independently}. Developing unified error bounds for all join graphs is promising future work.

{\color{black}
\subsubsection{\textbf{Point Queries}\label{section:point_queries}}
Following~\cite{dutt2019selectivity, dutt2020efficiently}, we can treat point queries as range queries that cover only the corresponding data point. For instance, point query ($\texttt{t.production\_year=1980}$) can be rewritten as $(\texttt{t.production\_year>1979} \wedge \texttt{t.production\_year} \leq 1980)$. This approach makes our results applicable to point queries.

\subsubsection{\textbf{Data Distribution Shifts}\label{section:data_shift}} 
Our theory assumes a static data distribution, and shifts in data distribution can introduce additional errors. Although handling data distribution shifts is not the primary focus of this paper, we provide a preliminary theoretical result that extends Theorem \ref{thm: ood} to account for such shifts:
Let \(\tv{Q_X}{P_X} \coloneqq \sup_{x\in\set{X}}\bigl|Q_X(x) - P_X(x)\bigr|\) be the \emph{total variation distance} between the old data distribution \(Q_X\) and the new data distribution \(P_X\). 


\begin{proposition} [\textbf{OOD Generalization Error Bound with Data Shifts}] \label{proposition:data_shift}
 Suppose assumptions in Theorem \ref{thm: ood} hold. In addition, if $\tv{Q_X}{P_X}\leq \epsilon_{TV}$, and the in-distribution generalization error of $\hat{S}$ can be bounded by $P_{Z_1^n \sim Q^{\otimes n}} \sbr{\er_Q(\hat{S}) < \epsilon } \geq 1-\delta$, then the out-of-distribution generalization error $\er_P(\hat{S})$ satisfies
    \begin{align}
        P_{Z_1^n \sim Q^{\otimes n}} \sbr{\er_P(\hat{S}) < (C_1 +1)C_2 \rbr{c_3^{-1}\sqrt{\epsilon} + 2 \epsilon_{TV}} } \geq 1-\delta
    \end{align}
\end{proposition}

\begin{proof}
    Following similar steps as in Theorem \ref{thm: ood}, we apply triangle inequality to (\ref{eqn: prf_ood_test_up}) to obtain
    \begin{align*}
        \abs{\hat{f}(x) - f_{P_X}(x) } \leq \abs{\hat{f}(x) - f_{Q_X}(x)} + \abs{f_{Q_X}(x) - f_{P_X}(x)}
    \end{align*}
    Finally, noticing that $\tv{Q_X}{P_X} = 2^{-1}\int \abs{f_{Q_X}(x) - f_{P_X}(x)} dx$ gives the desired result.
\end{proof}

Intuitively, this proposition says that if $P_X$ and $Q_X$ are close in terms of the total variation distance, then the OOD generalization error for $\hat{S}$ remains bounded. However, we note that relying on $TV$ as the measure of data distribution shifts is a strong assumption. Future research might explore more flexible metrics for studying how OOD generalization error behaves under diverse data shifts.


}
}

\subsection{Summary and Discussion~\label{section.theory.summary}}

Combining the results from \S~\ref{section.validity} and \S~\ref{section.ood}: 

\noindent\fbox{\begin{minipage}{0.46\textwidth}
 \textbf{Summary of our results}: If a selectivity learning model is induced by a \textit{signed measure} and trained on a sufficient number of queries, both its in-distribution generalization error (when training and test queries are from the same distribution) and out-of-distribution generalization error (when training and test queries come from different distributions) are \textit{bounded} (\textcolor{black}{\emph{i.e.,} close to training error}), under \emph{}{mild} assumptions (Assumption \ref{assump: bdd}-\ref{assump: samp_R}).
\end{minipage}}

\smallskip
The summary provides insights for designing improvement strategies for query-driven models: if we can show a class of selectivity learning models that are \emph{provably} induced by signed measures, then the favorable in-distribution (Theorem~\ref{theorem.remove.validity}) and OOD (Theorem~\ref{thm: ood}) generalization results are immediately applicable.

\vspace{-0.5em}
\section{Modeling \emph{CDF}s with \emph{Neural Net}s~\label{section.neucdf}}
Building on the insights in \S~\ref{section.theory.summary}, this section introduces the first strategy for enhancing query-driven selectivity learning models.
We first consider this question: is it feasible to design a selectivity estimation paradigm that works well in practice \emph{and} inherits our theoretical guarantees?
Unfortunately, this is not easy. Although existing approaches developed from SOTA theory~\cite{hu2022selectivity} achieve SOTA results \emph{among selectivity predictors that are induced by probability measures}, they are
not as effective (experimentally shown in \S~\ref{section.eval}) as recent deep learning-based models in practice due to the limited model capacities of models induced by probability measures.

On the other hand, deep learning models, while lacking comprehensive theoretical backing, demonstrate remarkable efficacy in practice. Often, deep models can achieve both very small training and test errors when queries are drawn from the same distribution, which cannot be fully explained by existing theories like the PAC framework. This aligns with extensive well-known empirical evidence in the ML literature (see~\cite{zhang2021understanding} for an overview).

Therefore, an ambitious goal is to combine the theoretical results from previous sections and the practicality of neural nets, so that the new selectivity estimation paradigm enjoys both theoretical guarantees and practical utility. In pursuit of this, we propose a novel selectivity estimation paradigm/framework, \neucdf.

\subsection{Overview~\label{section.neurocdf.overview}}
\noindent \textbf{High-Level Idea.} 
\neucdf leverages the fact \emph{that the selectivity of a rectangular query can be computed as a linear combination of the CDFs evaluated at its vertices.} (will be discusses in ~\S~\ref{section.neucdf.convert}). 
CDFs, in statistical terms, measure the probability that a random variable takes a value less than or equal to a specific point.
Therefore, the key idea of \neucdf is that, instead of directly modeling the ultimate selectivities of input queries, \emph{we use a neural network as the model to parameterize the underlying CDFs.} The query selectivity can be estimated by multiple calls to the \emph{CDF prediction model} $\mathcal{M}$ and aggregating the results, as shown in Figure~\ref{fig.neucdf}. 

\begin{figure}
\centering
\begin{subfigure}{.18\textwidth}
\captionsetup{justification=centering}
\begin{tikzpicture}[scale=0.5, transform shape]
\tikzstyle{every node}=[font=\LARGE]
        \draw[thick,->] (0,0) -- (5,0) node[anchor=north west] (mark1) {$x_1$};
        \draw[thick,->] (0,0) -- (0,5) node[anchor=south east]  {$x_2$};
        
        \filldraw[fill=blue!20, draw=blue, thick] (2,2) rectangle (4,4);
        
        \node at (2,2) [below left] {$F(a_1, a_2)$};
        \node at (4,4) [above right] {$F(b_1, b_2)$};
        \node at (2,4) [above left] {$F(a_1, b_2)$};
        \node at (4,2) [below right] {$F(b_1, a_2)$};
        
        \draw[dotted] (2,0) node[below]{$a_1$} -- (2,2);
        \draw[dotted] (4,0) node[below] {$b_1$} -- (4,4);
        \draw[dotted] (0,2) node[left] {$a_2$} -- (2,2);
        \draw[dotted] (0,4) node[left] {$b_2$} -- (4,4);

        \node[below=0.2cm of mark1, align=center, xshift=-2.5cm] (desc1) {\textbf{Query $q$ to CDFs ($F(\cdot)$)}};
    \end{tikzpicture}
\end{subfigure}
\begin{subfigure}{.29\textwidth}
\captionsetup{justification=centering}
\begin{tikzpicture}[node distance=2cm, auto, >=Stealth, thick, scale=0.6, transform shape]

    \node[draw, rectangle, fill=blue!5] (query1) {query $q$};
    \node[draw, rectangle, trapezium left angle=75, trapezium  right angle=105, fill=blue!15, right=2cm of query1] (model1) {model $\mathcal{M}$};
    \node[draw, ellipse, right=2.2cm of model1, fill=pink!10] (output1) {$\hat{S}(q)$};
    
    \draw[->] (query1) -- (model1);
    \draw[->] (model1) -- (output1);

    \node[below=0.2cm of model1, align=center] (desc1) {\textbf{Direct Selectivity Modeling}};

    \node[draw, rectangle, fill=blue!5, below=2.7cm of query1] (query2) {query $q$};
    
    \node[draw, ellipse, fill=white, right=0.5cm of query2, yshift=1.5cm, minimum width=0.8cm, minimum height=0.5,font=\small] (cdf1) {$(b_1,b_2)$};
    \node[draw, ellipse, fill=white, right=0.5cm of query2, yshift=0.5cm, minimum width=0.8cm, minimum height=0.5,font=\small] (cdf2) {$(a_1,b_2)$};
    \node[draw, ellipse, fill=white, right=0.5cm of query2, yshift=-0.5cm, minimum width=0.8cm, minimum height=0.5,font=\small] (cdf3) {$(b_1,a_2)$};
    \node[draw, ellipse, fill=white, right=0.5cm of query2, yshift=-1.5cm, minimum width=0.8cm, minimum height=0.5,font=\small] (cdf4) {$(a_1,a_2)$};
    
    \node[draw, rectangle, trapezium left angle=75, trapezium stretches=true, minimum height=1.5cm, fill=blue!15, right=2.6cm of query2] (model2) {$\mathcal{M}$};
    
    \node[draw, ellipse, fill=white, right=0.1cm of model2, yshift=1.5cm, minimum width=0.5cm, minimum height=0.2,font=\small] (f1) {$\hat{F}(b_1,b_2)$};
    \node[draw, ellipse, fill=white, right=0.1cm of model2, yshift=0.5cm,minimum width=0.5cm, minimum height=0.2,font=\small] (f2) {$\hat{F}(a_1,b_2)$};
    \node[draw, ellipse, fill=white, right=0.1cm of model2, yshift=-0.5cm,minimum width=0.5cm, minimum height=0.2,font=\small] (f3) {$\hat{F}(b_1,a_2)$};
    \node[draw, ellipse, fill=white, right=0.1cm of model2, yshift=-1.5cm,minimum width=0.5cm, minimum height=0.2,font=\small] (f4) {$\hat{F}(a_1,a_2)$};
    
    \node[draw, ellipse, fill=white, right=2.5cm of model2, fill=pink!10] (output2) {$\hat{S}(q)$};

    \draw[->] (query2) -- (cdf1.west);
    \draw[->] (query2) -- (cdf2.west);
    \draw[->] (query2) -- (cdf3.west);
    \draw[->] (query2) -- (cdf4.west);

    \draw[->] (cdf1.east) -- (model2);
    \draw[->] (cdf2) -- (model2);
    \draw[->] (cdf3) -- (model2);
    \draw[->] (cdf4.east) -- (model2);

    \draw[->] (model2.north east) -- (f1);
    \draw[->] (model2) -- (f2);
    \draw[->] (model2) -- (f3);
    \draw[->] (model2.south east) -- (f4);

    \draw[->] (f1) -- (output2) node[midway, above ] {\textbf{+}};
    \draw[->] (f2) -- (output2) node[midway, above ] {\textbf{-}};
    \draw[->] (f3) -- (output2) node[midway, below] {\textbf{-}};
    \draw[->] (f4) -- (output2) node[midway, below ] {\textbf{+}};

    \node[below=1.3cm of model2, align=center] (desc2) {\textbf{NeuroCDF: Modeling CDFs for Selectivity Estimation}};
    
\end{tikzpicture}
\end{subfigure}
\caption{Left: relationship between a rectangle query and CDFs; Right: direct selectivity modeling \textit{v.s.} \neucdf.} \label{fig.neucdf}
\end{figure}

\noindent \textcolor{black}{\textbf{Theoretical Guarantees.} As will be discussed in \S~\ref{section.neurocdf.theory}, \neucdf, as a framework, can be proved to be induced by signed measures through its CDF modeling. Hence, both the in-distribution (Thm~\ref{theorem.remove.validity}) and OOD (Thm~\ref{thm: ood}) generalization error bounds directly apply to \neucdf. This means given sufficient training queries and under Assumption \ref{assump: bdd}-\ref{assump: samp_R}, the in-distribution and OOD generalization errors of \neucdf are both close to its training error --- an advantage not present in existing methods that directly model selectivities.
}

{\color{black}
Apart from the theoretical guarantees provided by \neucdf, \neucdf combines the empirical strengths of neural nets as NNs are known for achieving very low training error~\cite{he2016deep} due to their high model capacity.}
Note that \neucdf does not offer generalization theories for neural networks \emph{per se}, but it leverages their empirical success (\textit{e.g.,} low training error) alongside the formal guarantees of the CDF modeling paradigm permitted by our theory.

\smallskip
{\color{black}
\noindent \textbf{Workflow of \neucdf.} In a \(d\)-dimensional data space, the CDF prediction model $\mathcal{M}$ of \neucdf takes as input a vector \(\mathbf{x} = [x_1, x_2, \ldots, x_d]^\top\) of real-valued variables and 
outputs an estimated cumulative distribution function (CDF), 
\(\hat{F}(\mathbf{x}) = \hat{P}(X \le \mathbf{x})\). With a query workload $\mathcal{W} = \{(q, l)\}$, the \neucdf framework proceeds in four steps, beginning with two data preprocessing phases.

\noindent \circled{1} \textbf{Normalization.} Each range query \(q\) is a \(d\)-dimensional hyper-rectangle 
\(\bigl(\tilde a_1 < x_1 \le \tilde b_1\bigr) \wedge \cdots \wedge \bigl(\tilde a_d < x_d \le \tilde b_d\bigr)\).
We apply min-max normalization to scale all unnormalized $\tilde a_i, \tilde b_i$ into normalized values  $a_i, b_i \in [0,1]$.
Unqueried attributes/dimensions (\textit{i.e.,} attribute not involved in the query) are set to \([0,1]\).

\noindent \circled{2} \textbf{CDF Conversion (\S~\ref{section.neucdf.convert})}. 
Each range query $q$ is converted into a set of vectors $v$ (at each vertex of $q$), which serve as inputs for training the CDF prediction model $\mathcal{M}$.

\noindent \circled{3} \textbf{Model Training}. Training of \neucdf uses forward--backward propagation with mean squared error (MSE) loss. Unlike existing query-driven models which directly predict the query selectivity using only one forward pass, \neucdf computes a query's 
selectivity \(\hat{S}(q)\) by gathering multiple CDF values \(\hat{F}(v)\) via 
\(\mathcal{M}\). These values are combined using Equation~\eqref{eq.cdf}. 
Because the entire procedure is fully \emph{differentiable}, the loss 
\(\sum (\hat{S}(q) - S(q))^2\) can be optimized via stochastic gradient descent (SGD) and batched training.


\noindent \circled{4} \textbf{Prediction.}  Once trained, \neucdf uses the same multi-call forward process: for an incoming query \(q\), it computes each required \(\hat{F}(v)\), then aggregates them to derive \(\hat{S}(q)\).
}



\subsection{Converting Queries to CDFs~\label{section.neucdf.convert}}  
Consider the case of 2-dimensional data shown in Figure~\ref{fig.neucdf} (left), one can verify that the selectivity of a query $q:\{(a_1 < x_1 \leq b_1) \land (a_2 < x_2 \leq b_2)\}$ (represented by the rectangle in blue) can be computed by aggregating the CDF values at the four vertices \textcolor{black}{(\textit{i.e.,} $(b_1, b_2), (a_1, b_2), (b_1,a_2), (a_1, a_2)$)} of the query rectangle,
\begin{equation*}
    S(q) = F(b_1, b_2) - F(a_1, b_2) - F(b_1, a_2) + F(a_1, a_2).
\end{equation*}
We extend the formula to $d-$dimensional data (outlined on page 197 of the book~\cite{durrett2019probability}). Let $q$ be a range query (of hyper-rectangle) in the $d$ dimensional space, \textit{i.e.,} $q=(a_1, b_1] \times ... \times (a_d, b_d]$. The vertices $V$ of this hyper-rectangle are $V=\{a_1, b_1\} \times ... \times \{a_d, b_d\}$. For any vertex $v \in V$, define $\#a(v)$ as the number of $a$'s in $v$, indicating the count of left endpoints. For example, in Figure~\ref{fig.neucdf} (left), $\#a([a_1, b_2])=1$.
The general case formula is subsequently provided for completeness.

\begin{theorem}
Let $\text{sgn}(v) = (-1)^{\#a(v)}$. The selectivity of range query $q$ in $d-$dimensional space is computed by aggregating the CDF values at all vertices of the query hyper-rectangle using the below formula,
\begin{equation}\label{eq.cdf}
    S(q) = \sum_{v \in V} \text{sgn}(v)F(v)
\end{equation}
\end{theorem}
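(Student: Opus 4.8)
The plan is to reduce the statement to a purely algebraic identity between indicator functions, and then integrate against the data distribution $P_D$. The starting observation is that membership in the query box factorizes across coordinates, so that
\begin{equation*}
\1(x \in q) = \prod_{i=1}^{n} \1(a_i < x_i \le b_i),
\end{equation*}
and each one-dimensional factor can be rewritten as a difference of two threshold (lower-orthant) indicators, namely $\1(a_i < x_i \le b_i) = \1(x_i \le b_i) - \1(x_i \le a_i)$.

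Next I would expand the product $\prod_{i=1}^{n} \rbr{\1(x_i \le b_i) - \1(x_i \le a_i)}$. Choosing, for each coordinate $i$, either the $b_i$-term (sign $+1$) or the $a_i$-term (sign $-1$) corresponds exactly to selecting a vertex $v \in V = \{a_1,b_1\}\times\cdots\times\{a_n,b_n\}$ with $v_i = b_i$ or $v_i = a_i$, respectively. The resulting monomial is $\prod_{i=1}^{n}\1(x_i \le v_i) = \1(x \le v)$, and its coefficient is $-1$ raised to the number of coordinates where the $a_i$-term was picked, i.e. $(-1)^{\#a(v)} = \sign(v)$. This yields the identity
\begin{equation*}
\1(x \in q) = \sum_{v \in V} \sign(v)\,\1(x \le v).
\end{equation*}

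Finally I would take the expectation $\EE_{x \sim P_D}$ of both sides. Since $V$ is a finite set, linearity of expectation applies immediately and no interchange-of-limit subtlety arises, giving $S(q) = \EE\,\1(x \in q) = \sum_{v \in V} \sign(v)\,\EE\,\1(x \le v)$; by definition of the CDF, $\EE\,\1(x \le v) = P(X \le v) = F(v)$, which is precisely the claimed formula \eqref{eq.cdf}.

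The only point requiring genuine care is the combinatorial bookkeeping in the product expansion: one must verify that the sign attached to each vertex is governed by $\#a(v)$ (the count of left endpoints) rather than by the number of right endpoints. A quick sanity check is the base case $n=1$, where the two vertices contribute $F(b_1) - F(a_1)$, matching $\sign(b_1)=+1$ and $\sign(a_1)=-1$. An equivalent route would be induction on the dimension $n$, peeling off one coordinate at a time via the one-dimensional difference; the product-expansion argument, however, avoids nested case analysis and makes the sign structure transparent, so that is the route I would follow.
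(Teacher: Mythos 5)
Your proof is correct: the paper itself only offers a one-line sketch citing the inclusion--exclusion principle (and page 36 of Durrett), and your product expansion of $\prod_i\bigl(\1(x_i \le b_i) - \1(x_i \le a_i)\bigr)$ followed by taking the expectation under $P_D$ is precisely the standard detailed execution of that same inclusion--exclusion argument. The sign bookkeeping via $\#a(v)$ and the $n=1$ sanity check are both right, so there is nothing to add.
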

\begin{sproof}
This theorem is a direct application of the inclusion-exclusion principle~\cite{roberts2009applied}. See page 36 of the book~\cite{durrett2019probability} for details.
\end{sproof}

\noindent \textbf{Model Choice for CDF Prediction.}  \neucdf can incorporate any query-driven model architecture by viewing the input vector as a query (\textit{i.e.,} we can interpret every CDF as the selectivity estimate of a one-sided query). For example, $F(b_1, b_2)$ is equivalent to the selectivity estimation of a legitimate query $q:\{( x_1 \leq b_1) \land ( x_2 \leq b_2)\}$. Thus, possible model choices include Multi-Set Convolution Networks~\cite{kipf2019learned}, MLP with flattened query encoding~\cite{dutt2019selectivity}, or more recent NN models~\cite{reiner2023sample, li2023alece, sun2019end}.
{\color{black} 
Although non-NN regression methods such as XGBoost~\cite{chen2016xgboost} offer greater interpretability and could be used for CDF prediction in the \neucdf framework, we opt for NNs due to their \emph{ease of optimization} in our setting. XGBoost relies on direct mappings from data points to their CDF values (\(\mathbf{x} \mapsto F(\mathbf{x})\)) and optimizes based on gradients 
between predictions and actual values. However, in selectivity learning, we \emph{only} have mappings from queries to their selectivities (\(q \mapsto S(q)\)), lacking the direct data-to-CDF mappings (or actual CDF values) required by XGBoost. This makes optimizing XGBoost challenging. NNs, on the other hand, can be trained \emph{end-to-end} effectively using only \(q \mapsto S(q)\) mappings. Because the computation in Eq.~\ref{eq.cdf} is fully differentiable, we can employ backpropagation without needing direct \(\mathbf{x} \mapsto F(\mathbf{x})\) mappings (or the actual values of  $F(\mathbf{x})$).
}

\subsection{\textcolor{black}{Efficiency}\label{section.neucdf.efficiency}}

\eat{
\subsection{Training and Using \neucdf~\label{section.neucdf.train}}

\noindent \textbf{Training \neucdf from Queries.} Training of \neucdf follows the forward- and backward-propagation paradigm of neural networks, with a minor variation. Traditional query-driven models directly predict the query selectivity using only one forward pass. In contrast, \neucdf needs to make \emph{multiple} forward passes to the CDF prediction model $\mathcal{M}$ to obtain the selectivity estimate, as shown in Figure~\ref{fig.neucdf} (right). Specifically, each query $q$ is converted into a set of CDFs, $F(v)$, at each vertex $v$ of $q$. \neucdf then estimates the (necessary) values of CDFs, $F(v)$ using $\mathcal{M}$. The ultimate selectivity prediction $\hat S(q)$ is estimated via ~(\ref{eq.cdf}), which is used to compute the loss function, Mean Squared Error (MSE = $\frac{\sum{(\hat S(q) - S(q))^2}}{N}$).
The backward pass of \neucdf can be done by backpropagation and stochastic gradient descent (SGD) since the entire computation steps are \emph{differentiable}.

\smallskip


\smallskip

\noindent \textbf{Using \neucdf.} Once trained, \neucdf can process incoming queries by executing multiple model calls and aggregating these results, mirroring the model's forward pass during training. 
\smallskip
}

\begin{theorem}
The number of calls to the CDF prediction model $\mathcal{M}$ for estimating a query selectivity is $2^{n_c}$, where $n_c$ is the number of attributes/columns involved in the query.
\end{theorem}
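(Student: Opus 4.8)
The plan is to derive the count directly from the selectivity formula (\ref{eq.cdf}) by showing that, although the hyper-rectangle of a query in the full $n$-dimensional space has $2^n$ vertices, all but $2^{n_c}$ of them contribute a CDF value that is identically zero, and hence require no evaluation of $\mathcal{M}$. First I would partition the $n$ columns of the data space into the $n_c$ columns on which $q$ imposes a (finite) predicate and the $n-n_c$ columns on which $q$ imposes no constraint. For each unconstrained column $j$, the query range spans the entire domain, so we may write its endpoints as $a_j=-\infty$ (the infimum of the domain) and $b_j=+\infty$ (the supremum). With this convention $q$ is still the box $(a_1,b_1]\times\cdots\times(a_n,b_n]$, and (\ref{eq.cdf}) applies verbatim.

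Second I would examine each vertex $v\in V=\{a_1,b_1\}\times\cdots\times\{a_n,b_n\}$. If $v$ takes the value $a_j=-\infty$ in some unconstrained coordinate $j$, then by definition of the joint CDF we have $F(v)=P(X_j\le-\infty,\dots)=0$, so the corresponding term of the sum vanishes and no model call is needed. The only vertices that can contribute are therefore those that fix every unconstrained coordinate at its upper endpoint $b_j=+\infty$; on such a vertex the CDF marginalizes out all unconstrained variables, i.e.\ it reduces to the joint CDF of the $n_c$ constrained columns evaluated at the chosen corner. The number of such surviving vertices follows from a free choice of $a_i$ or $b_i$ in each of the $n_c$ constrained coordinates while the remaining $n-n_c$ coordinates are pinned, giving $2^{n_c}\cdot 1^{\,n-n_c}=2^{n_c}$.

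Finally I would confirm that the surviving sum is exactly the $n_c$-dimensional instance of (\ref{eq.cdf}): since every unconstrained coordinate sits at $b_j$, the count $\#a(v)$ — and hence the sign $\text{sgn}(v)=(-1)^{\#a(v)}$ — depends only on the constrained coordinates, so the $2^{n_c}$ surviving terms form precisely the inclusion--exclusion over the marginal CDF of the involved columns. Each of these $2^{n_c}$ corners is distinct and must be evaluated once by $\mathcal{M}$ (treating each marginal CDF as a one-sided query on the $n_c$ involved columns, as noted in the model description), which establishes the claim. The only point requiring care — and the closest thing to an obstacle — is the vanishing argument for unconstrained coordinates: one must justify that setting $a_j$ to the domain infimum forces $F(v)=0$, and that the remaining corners neither coincide nor collapse, so that the surviving count is exactly $2^{n_c}$ rather than merely an upper bound.
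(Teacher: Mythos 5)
Your proposal is correct and follows essentially the same argument as the paper: identify that unconstrained columns contribute endpoints at the domain infimum and supremum, observe that any vertex taking the infimum in an unconstrained coordinate has $F(v)=0$ and needs no model call, and count the $2^{n_c}$ surviving vertices determined by free choices in the constrained coordinates. The paper states this more tersely (using a $[0,1]$-normalized domain so the endpoints are $0$ and $1$), while you additionally verify that the surviving corners are distinct and reassemble into the $n_c$-dimensional inclusion--exclusion, which is a harmless strengthening.
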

\begin{proof}
\textcolor{black}{
First, every unqueried attribute has its \(a_i=0, b_i=1\). By definition, if any \(v_i=0\), it directly implies \(F(v)=0\), eliminating the need for CDF estimations. Consequently, those CDFs requiring estimates from \(\mathcal{M}\) will always have their unqueried attribute \(v_i=1\). Hence, the number of distinct \(v\)-vectors requiring \(\mathcal{M}\)'s estimates is \(2^{n_c}\), where \(n_c\) is the number of columns involved in the query.
}
\end{proof}
\textcolor{black}{
The result shows that in \neucdf, we do not have to estimate the CDF value for every possible vertex \(v\). Only those \textit{necessary} vertices require estimates from \(\mathcal{M}\).
}


\subsection{Theoretical Analysis~\label{section.neurocdf.theory}} {Next, we prove that \neucdf, as a framework, is induced by a signed measure due to its CDF modeling paradigm. \emph{This connects \neucdf to the two theoretical results in previous sections}. Surprisingly, this property applies to LEO as well. 

\begin{theorem} \label{thm: induce}
    Suppose $\set{R}$ consists of axis-aligned hyper-rectangles. Given a function $\hS:\set{R} \to \RR$, suppose there exists a function $F_{\hS}: \cX\to \RR$ such that for any $R \in \set{R}$, $\hS(R) = \sum_{v \in V_R} \text{sgn}(v)F_{\hS}(v)$ where $V_R$ is the vertex set of $R$. Then $\hS$ is induced by a signed measure.
\end{theorem}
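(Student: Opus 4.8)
The plan is to exhibit an explicit signed measure $\mu_{\hS}$ on $(\cX, \set{M})$ whose value on every hyper-rectangle $R \in \set{R}$ equals $\hS(R)$; by the definition in \S~\ref{subsec: sgn_m}, this is exactly what it means for $\hS$ to be \emph{induced by a signed measure}. The starting observation is that the quantity $\sum_{v \in V_R} \sign(v) F_{\hS}(v)$ in the hypothesis is precisely the $n$-dimensional Stieltjes increment of $F_{\hS}$ over $R$: an alternating sum of $F_{\hS}$ over the $2^n$ corners of $R$, signed by the parity of ``left'' endpoints. This is the same object that, for a genuine CDF, produces the probability mass of a rectangle via inclusion--exclusion (Equation~\eqref{eq.cdf}), so the task reduces to the classical problem of building a (signed) Lebesgue--Stieltjes measure from a generalized distribution function.

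First I would work on the semiring $\mathcal{G}$ of half-open rectangles $(a_1,b_1]\times\cdots\times(a_n,b_n]$ and define the set function $\nu(R) := \sum_{v \in V_R}\sign(v)F_{\hS}(v)$. The key algebraic fact to verify is that $\nu$ is finitely additive on $\mathcal{G}$: if a rectangle is cut by a hyperplane $x_k = c$ into two sub-rectangles, the corner contributions on the shared face cancel and the two increments sum to the original. Iterating this across coordinates shows $\nu$ is additive over any grid refinement, and since any finite partition of a rectangle into rectangles admits a common grid refinement, $\nu$ is finitely additive on $\mathcal{G}$. This step is routine telescoping and is where the inclusion--exclusion structure does all the work; in particular it already forces the finite-additivity constraint \eqref{eq.def.add} that \S~\ref{section.gap} identified as necessary for a measure-induced predictor.

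The remaining, genuinely delicate, step is to upgrade the finitely additive $\nu$ to a \emph{countably additive} signed measure on the generated $\sigma$-algebra, since condition \ref{cond.add} demands countable additivity. I would take the route that matches the paper's absolutely-continuous framework: assuming $F_{\hS}$ possesses an integrable mixed partial density $f_{\hS} = \partial^{\,n} F_{\hS}/(\partial x_1\cdots\partial x_n)$ (which holds for the smooth neural parameterizations used by \neucdf, and almost everywhere for piecewise-linear ones), define $\mu_{\hS}(E) := \int_E f_{\hS}(x)\,dx$. Then $\mu_{\hS}$ is automatically a countably additive signed measure, absolutely continuous with respect to Lebesgue measure, and the iterated fundamental theorem of calculus gives $\mu_{\hS}(R) = \int_R f_{\hS} = \sum_{v\in V_R}\sign(v)F_{\hS}(v) = \hS(R)$, as required. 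A density-free alternative is to decompose $F_{\hS}$ as a difference of two $n$-increasing, right-continuous functions (a Hardy--Krause bounded-variation / Jordan-type decomposition), apply the standard positive Stieltjes construction cited in \cite{durrett2019probability} to each to obtain measures $\mu^{+},\mu^{-}$, and set $\mu_{\hS} = \mu^{+} - \mu^{-}$; linearity of the increment operator again yields $\mu_{\hS}(R)=\hS(R)$.

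I expect this last step to be the main obstacle: finite additivity is immediate from the vertex cancellation, but guaranteeing countable additivity is exactly where some regularity of $F_{\hS}$ (differentiability, or right-continuity plus bounded variation) must be invoked, since no such hypothesis is stated explicitly. Once the measure is in hand, applying the theorem to \neucdf and LEO is immediate, as both express their predictions as precisely such a signed vertex-aggregation of a learned $F_{\hS}$; consequently Theorem~\ref{theorem.remove.validity} and Theorem~\ref{thm: ood} apply to them without further work.
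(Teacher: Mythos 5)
Your proposal is correct and follows essentially the same route as the paper, whose proof is only a one-line sketch deferring to ``a simple modification of the proof of Theorem 1.1.11 in \cite{durrett2019probability}'' --- i.e., exactly the Lebesgue--Stieltjes construction you carry out: finite additivity of the vertex-increment set function on half-open rectangles, then extension to a countably additive signed measure. Your observation that the extension step silently requires some regularity of $F_{\hS}$ (right-continuity plus a bounded-variation or density hypothesis) is a fair criticism of the theorem as stated rather than a gap in your argument, and your density-based resolution matches the paper's restriction to absolutely continuous signed measures in $\set{S}_{\sign}(C)$.
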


\begin{sproof}
    This can be shown by a simple modification of the proof of Theorem 1.1.11 in ~\cite{durrett2019probability}.
\end{sproof}

\begin{corollary} \label{cor: induce}
    All predictions from \neucdf and LEO are induced by signed measures.
\end{corollary}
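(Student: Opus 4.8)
The plan is to derive Corollary~\ref{cor: induce} directly from Theorem~\ref{thm: induce}: since the range space $\set{R}$ of interest consists of axis-aligned hyper-rectangles (range selection queries), it suffices, for each of the two models, to exhibit a single scalar function $F_{\hS}:\cX\to\RR$ such that every prediction $\hS(R)$ equals the signed vertex aggregation $\sum_{v\in V_R}\text{sgn}(v)F_{\hS}(v)$. Once such an $F_{\hS}$ is identified, Theorem~\ref{thm: induce} immediately certifies that $\hS$ is induced by a signed measure, and nothing further is needed.

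For \neucdf the claim is essentially by construction. The framework trains a network $\mathcal{M}$ that outputs a scalar $\hat F(\mathbf{x})$ for each $\mathbf{x}\in\cX$, and by~(\ref{eq.cdf}) the selectivity it reports for a query hyper-rectangle $q$ with vertex set $V$ is exactly $\hS(q)=\sum_{v\in V}\text{sgn}(v)\hat F(v)$. Taking $F_{\hS}:=\hat F$ therefore places every \neucdf prediction in the hypothesis of Theorem~\ref{thm: induce}. Note that this step does \emph{not} require $\hat F$ to be a genuine CDF (monotone, valued in $[0,1]$): the theorem needs only a real-valued vertex function, which is precisely why signed rather than probability measures are the right object here.

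The substantive work is the LEO case, where the vertex-aggregation form is not immediate and must be extracted from the way LEO applies its corrections. The key observation is that LEO's adjustment factors act on \emph{cumulative} (one-sided) statistics: for a one-sided range it multiplies the base estimate $\hat{Hist}(\cdot)$ by a stored ratio $r(\cdot)$, linearly interpolating $r$ between observed points so that $r$ is defined on all of $\cX$. I would therefore set $F_{\hS}(v):=\hat{Hist}(v)\cdot r(v)$, a well-defined map $\cX\to\RR$, and argue that LEO reconstructs the selectivity of a box query from these corrected cumulative values via the same inclusion-exclusion formula~(\ref{eq.cdf}) over the query's vertices. With this $F_{\hS}$, Theorem~\ref{thm: induce} again applies and yields the inducing signed measure.

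I expect the LEO step to be the main obstacle, for two related reasons. First, one must verify that LEO genuinely operates at the level of cumulative statistics and then assembles box selectivities by the vertex formula, rather than applying one multiplicative correction directly to a box estimate; only the former is consistent with the required form, and it is what the description in \S\ref{section.prior} encodes. Second, the linear interpolation of $r$ must be handled so that $F_{\hS}$ is a bona fide, everywhere-defined scalar function---interpolation guarantees this, but it is the point at which the argument would break if corrections were stored for multi-sided ranges instead of one-sided ones. Everything else reduces to a direct invocation of Theorem~\ref{thm: induce}.
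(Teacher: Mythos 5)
Your proposal matches the paper's own proof sketch: the paper likewise verifies that \neucdf satisfies the hypothesis of Theorem~\ref{thm: induce} by construction via~(\ref{eq.cdf}), and handles LEO by setting $\hat{F}_{LEO}(x) = F_{\mathrm{hist}}(x)\cdot g_{\mathrm{adjust}}(x)$, which is exactly your $F_{\hS}(v) := \hat{Hist}(v)\cdot r(v)$. Your additional care about why LEO's corrections must act on one-sided (cumulative) statistics and why interpolation makes $F_{\hS}$ everywhere defined is a reasonable elaboration of a point the paper's sketch leaves implicit, but the route is the same.
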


\begin{sproof}
    \neucdf and LEO satisfy Theorem \ref{thm: induce} by design, as they model CDFs (for LEO, $\hat{F}_{LEO}(x) = F_{\mathrm{hist}}(x) \cdot g_{\mathrm{adjust}}(x)$ where $F_{\mathrm{hist}}(x) $ is the CDF of the histogram and $g_{\mathrm{adjust}}(x)$ is the adjustment factor at $x$) and use Eq~\ref{eq.cdf} to estimate range queries.
\end{sproof}
\vspace{-0.6em}

With Corollary \ref{cor: induce} in place, let $\set{S}_{\neucdf}$ and $\set{S}_{LEO}$ denote the hypothesis class of \neucdf and LEO when the inducing signed measures are all absolute continuous. Then 
the learnability results for \neucdf and LEO are given as follows. 

\begin{theorem} \label{thm: neucdl_leo}
Let $\Sigma = (\mathcal{X}, \set{R})$ be a range space. If  \vcd{$\Sigma$} $= \lambda$ where $\lambda$ is some constant,
then the fat-shattering dimension of $\set{S}$ is finite and satisfies: $\fatt{\set{S}} = \tilde{O}((1/\gamma)^{\lambda+1})$ for any $\set{S} \in \cbr{\set{S}_{\neucdf} \ , \set{S}_{LEO}}$.
\end{theorem}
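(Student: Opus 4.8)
The plan is to realize both hypothesis classes as subclasses of the signed-measure class $\set{S}_{\sign}(C)$ for an appropriate constant $C$, and then to combine Theorem \ref{theorem.remove.validity} with the monotonicity of the fat-shattering dimension under class inclusion. The first step is to recall from Corollary \ref{cor: induce} (which itself rests on Theorem \ref{thm: induce}) that every $\hS$ in $\set{S}_{\neucdf}$ or $\set{S}_{LEO}$ is induced by a signed measure $\mu_{\hS}$; by the standing restriction in the theorem these measures are absolutely continuous with respect to the Lebesgue measure, so both classes are already contained in $\set{S}_{\sign}$.

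Next I would upgrade this to the quantitative inclusion $\set{S} \subseteq \set{S}_{\sign}(C)$ for $\set{S} \in \cbr{\set{S}_{\neucdf}, \set{S}_{LEO}}$, i.e. establish a uniform bound $\abs{\mu_{\hS}}(\cX) \leq C$ on the total variation of the inducing measures. For \neucdf this should follow from the regularity of the CDF model $F_{\hS}$ over the bounded data domain $\cX$: the induced signed density $\hf$ then has $\int_{\cX} \abs{\hf(x)}\,dx$ controlled by a constant independent of the particular predictor. For LEO the same conclusion is reached by writing $\hat F_{LEO}(x) = F_{\mathrm{hist}}(x)\cdot g_{\mathrm{adjust}}(x)$, exactly as in the proof of Corollary \ref{cor: induce}, and bounding the total variation of this product.

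With the inclusion in hand, the conclusion follows from two observations. First, the fat-shattering dimension is monotone under inclusion: if a subset $V \subseteq \cX$ is $\gamma$-shattered by $\set{S}$, the witness function and all realizing hypotheses already lie in $\set{S}_{\sign}(C)$, so $\fatt{\set{S}} \leq \fatt{\set{S}_{\sign}(C)}$. Second, Theorem \ref{theorem.remove.validity} gives $\fatt{\set{S}_{\sign}(C)} = \tilde O\rbr{C\cdot(1/\gamma)^{\lambda+1}}$. Treating $C$ as a constant, this is $\tilde O\rbr{(1/\gamma)^{\lambda+1}}$, which is the claimed bound for both classes.

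The main obstacle is the second step: pinning down a uniform constant $C$ bounding the total variation of the inducing signed measures. Boundedness of the CDF outputs alone does \emph{not} control the total variation, since a bounded CDF can have arbitrarily large variation; the argument must therefore genuinely exploit additional regularity of the CDF model (for instance Lipschitzness over the bounded domain $\cX$) or fold the bound into the definition of the hypothesis class. Once $C$ is fixed as a constant, the remaining steps are routine applications of the inclusion, monotonicity, and Theorem \ref{theorem.remove.validity}.
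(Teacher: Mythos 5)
Your proposal takes essentially the same route as the paper: the paper's proof sketch likewise establishes $\set{S} \subset \set{S}_{\sign}(C)$ for some constant $C$ and then invokes Theorem~\ref{theorem.remove.validity}, with Corollary~\ref{cor: induce} supplying the signed-measure structure. The obstacle you flag in your final paragraph is real but is equally present in the paper's own one-line sketch, which passes from ``the predictions of \neucdf and LEO are bounded'' to the total-variation bound $\abs{\mu_{\hS}} \leq C$ without justifying that step (boundedness of $\hS(R)$ over hyper-rectangles does not by itself bound $\abs{\mu_{\hS}}(\cX)$), so your more cautious treatment of that point is, if anything, an improvement on the paper's argument.
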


\vspace{-0.6em}
\begin{sproof}
    One can show that the predictions of \neucdf and LEO are bounded, and hence $\set{S} \subset \set{S}_{\sign}(C)$ for some constant $C$. Then the theorem follows by applying Theorem \ref{theorem.remove.validity}.
\end{sproof}
\vspace{-0.6em}

\noindent \textbf{Limitation of \neucdf.} Currently, \neucdf is not compatible with Qerror or MSLE because it can yield negative estimates where Qerror does not apply. This issue arises as the NN model might fail to produce a valid CDF, which can lead to negative values in estimates from ~(\ref{eq.cdf}).
We attempted to address this issue by clipping negative estimates to a small value (e.g., $1/|D|$) or enforcing monotonicity~\cite{liu2020certified}. Unfortunately, we observed significant performance degradation in practice since 1) the clipping is not differentiable preventing the model from learning from queries with clipped estimates; 2) the enforcement of monotonicity would reduce model capacity and introduce noises into training. We leave training \neucdf with Qerror as future work.

\subsection{\mbox{Preliminary Evaluation of \neucdf~\label{neucdf.evaluation}}}
We implement it with LW-NN~\cite{dutt2019selectivity} and MSCN~\cite{kipf2019learned} to validate our improvement strategy (\textit{i.e., CDF modeling}). Here we intentionally exclude data information to concentrate on the modeling paradigm itself. Despite that \neucdf is not compatible with Qerror, a major loss function used in recent query-driven models, we observe significant improvement in OOD generalization on both models, which further inspires us to design a more general improvement strategy in the next section.

We generate a collection of training queries on a synthetic dataset sampled from a 10-dimensional highly correlated Gaussian distribution. Moreover, we use two types (In-distribution and OOD) of test queries to assess the model generalization capabilities.

\begin{table}
\caption{\mbox{Generalization performances of different models}\label{table.qualitative.compare}}
\scalebox{0.9}{
\begin{tabular}{|c|c|c|c|c|}
  \hline
  \multirow{2}{*}{Model}  &\multicolumn{2}{c|}{In-Dis Generalization}  & \multicolumn{2}{c|}{OOD Generalization}  \\
\cline{2-5}
  & RMSE & Qerror &   RMSE  & Qerror   \\
    \hline
  {LW-NN}  & \ding{72}\ding{73} & \ding{72}\ding{72} & \ding{73}\ding{73}  & \ding{73}\ding{73}  \\
  \hline
{MSCN}  & \ding{72}\ding{73} & \ding{72}\ding{72} & \ding{73}\ding{73}  & \ding{73}\ding{73}  \\
   \hline
   \makecell{\neucdf (LW-NN)}  & \ding{72}\ding{72} & \ding{72}\ding{73} &  \ding{72}\ding{73} & \ding{72}\ding{73}  \\ 
   \hline

   \makecell{\neucdf (MSCN)}  & \ding{72}\ding{72} & \ding{72}\ding{73} &  \ding{72}\ding{73} & \ding{72}\ding{73}  \\ 
   \hline
\end{tabular}}
\end{table}

\noindent \textbf{\neucdf \textit{v.s.} Direct Selectivity Modeling?}
We summarize the generalization performance of different models w.r.t two popular measures (RMSE and Qerror) in Table~\ref{table.qualitative.compare}. We define three qualitative levels of generalization performance on test sets ---  (\ding{72}\ding{72}): RMSE $< 0.05$ or median Qerror $< 2$; (\ding{72}\ding{73}): $0.05<$ RMSE $< 0.2$ or $2<$ median Qerror $< 10$; (\ding{73}\ding{73}): RMSE $> 0.2$ or median Qerror $> 10$. From the table, we observe two important findings.

\begin{description} [leftmargin=*]  
\item [F1.] All four models achieve very good in-distribution generalization performance w.r.t the metric they are optimized for. Specifically, both LW-NN and MSCN are optimized for Qerror, but after using the \neucdf paradigm, they are optimized for RMSE.
\item [F2.] LW-NN and MSCN perform poorly on OOD queries both in terms of Qerror and RMSE. More importantly, \neucdf can help them achieve much better OOD generalization performance even with Qerror.  This matches the theoretical results regarding OOD generalization error in \S~\ref{section.ood}.
\end{description}

\vspace{-0.7em}
\section{Training with CDF Self-Consistency}
\label{section.training}
Motivated by the theoretical results and the limitation observed in \neucdf, this section introduces a new training framework, \name, for query-driven selectivity models. 


\vspace{-0.5em}
\subsection{\textbf{High-level Idea}\label{section.secon.overview}}

In the previous section, we noted that direct query selectivity modeling is effective for in-distribution generalization with respect to arbitrary measures or loss functions. However, the CDF modeling paradigm used in \neucdf provides superior OOD generalization {\color{black}because it enforces a \emph{hard} constraint on a signed measure, ensuring that all predictions from \neucdf are \emph{coherently} induced by a signed measure.}
However, it does not support arbitrary loss functions, such as Qerror. This raises a key question: \emph{can we combine the advantages of direct query selectivity modeling and CDF modeling to achieve both strong in-distribution generalization with arbitrary loss functions and improved OOD generalization?}


 \name addresses this limitation by adopting direct query selectivity modeling (\textcolor{black}{which avoids the negative estimate issue}) and introducing a \textbf{soft} constraint on the signed measure, unlike the \emph{hard} constraint used in \neucdf.
\textcolor{black}{ Specifically,
\name operates on selectivity learning model $\mathcal{M}$ that targets the query selectivity directly (instead of \neucdf that requires $\mathcal{M}$ to model the CDFs), and applies a \emph{soft} constraint through \emph{CDF self-consistency regularization} during training. Recall that as discussed in \S~\ref{section.neucdf.convert}, \emph{each CDF corresponds to the selectivity estimate of a one-sided rectangle query}, thus we can extract the CDFs learned by the selectivity model~\(\mathcal{M}\) from these queries. We then utilize appropriate loss functions to maintain consistency between the learned CDFs and the learned selectivity function. The intuition is that better \emph{alignment} between the learned selectivity functions and the extracted CDFs indicates that $\mathcal{M}$ is more closely induced by a signed measure, thereby being more likely to achieve bounded OOD generalization error.
}


This approach combines the benefits of both paradigms, providing robust OOD generalization and allowing flexibility in the choice of loss functions.  \textcolor{black}{Although \name is inspired by both the theoretical and empirical analyses of \neucdf, it does not come with a theoretical guarantee because it cannot be confirmed as being entirely induced by signed measures. Despite this, \name shows significant practical effectiveness in our experiments.}

\subsection{\textbf{CDF Self-Consistency Regularization}\label{section.secon.cdf}} \

\begin{figure}
\centering
\includegraphics[height=0.13\textwidth]{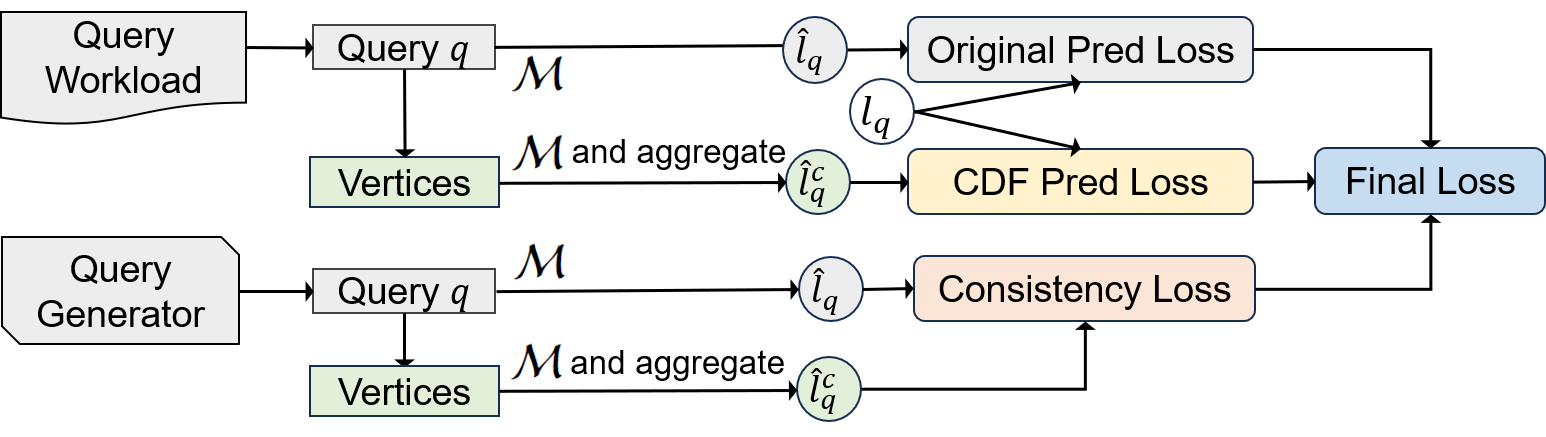}
\vspace{-1em}
\caption{Training a selectivity model $\mathcal{M}$ with \name.}
\label{fig:cdftrain}
\end{figure}

Figure~\ref{fig:cdftrain} illustrates the training workflow of $\mathcal{M}$ using \name. \textcolor{black}{\name processes a \emph{query workload} $\mathcal{W} = \{(q, l)\}$ (same as \neucdf), and utilizes a \emph{query generator} $\mathcal{G}$. They collectively contribute to the final loss optimized by $\mathcal{M}$. The operation of \name within each query batch is described step-by-step.}

\noindent \textcolor{black}{\textbf{\circled{1} Loss Computation with  Query Workload $\mathcal{W}$.} \name initiates with two preprocessing steps analogous to \neucdf: \emph{normalization} and \emph{CDF conversion} (which extracts the set $\{v\}$ of vectors for each query $q$ at its vertices).
With all information needed, \name then computes two types of losses: \textbf{Original Prediction Loss} $\mathcal{L}_{\text{OriPred}}$ and \textbf{CDF Prediction Loss} $\mathcal{L}_{\text{CDFPred}}$.}

\textcolor{black}{
$\mathcal{L}_{\text{OriPred}}$ is calculated as the discrepancy between $\mathcal{M}$'s direct selectivity prediction for a query $q$, denoted $\hat{l}_q$, and the true label $\hat{l}_q$. Typically, the loss function involves Qerror or MSLE, consistent with current methods in query-driven selectivity learning.}

\textcolor{black}{$\mathcal{L}_{\text{CDFPred}}$ aligns with the procedures of \neucdf. For each query $q$, \name transforms its vertex set $\{v\}$ into corresponding one-sided queries and extracts the CDFs as predicted by $\mathcal{M}$. Using the formula~(\ref{eq.cdf}), the selectivity estimate (denoted $\hat{l}^c_q$) from the learned CDFs is calculated, and $\mathcal{L}_{\text{CDFPred}}$ is then defined as the RMSE between $\hat{l}^c_q$ and the actual label ${l}_q$. This loss forces the model $\mathcal{M}$ to learn the underlying CDFs from the \emph{training workload}, aside from the direct mapping from queries to selectivities.
}





\noindent \textcolor{black}{\textbf{\circled{2} Loss Computation with Query Generator $\mathcal{G}$.}  The query generator samples queries from a distribution, using random sampling for this paper, although other sampling methods are compatible within \name. Each sampled query $q$ undergoes the same \emph{normalization} and \emph{CDF conversion} steps as in \circled{1} to produce a set of vertex vectors for $q$. These vectors are used to compute the selectivity estimate $\hat{l}^c_q$ from the learned CDFs. We then introduce a third type of loss, \textbf{Consistency Loss} $\mathcal{L}_{\text{Consistent}}$, defined as the RMSE between $\hat{l}^c_q$ and $\mathcal{M}$'s direct selectivity estimate of $q$, $\hat{l}_q$. This loss enforces \emph{consistency} between $\mathcal{M}$’s direct selectivity predictions and its learned CDFs across diverse queries. This step can be implemented synchronously with \circled{1} to enhance training efficiency.} 
}

\noindent \textcolor{black}{\textbf{\circled{3} Model Training.} The final loss is defined as
\begin{equation}
    \mathcal{L} = \mathcal{L}_{\text{OriPred}} + \omega_1 \mathcal{L}_{\text{CDFPred}} + \omega_2 \mathcal{L}_{\text{Consistent}},
\end{equation}
where $\omega_1, \omega_2$ are hyper-parameters controlling the balance among the three losses. We empirically tune them from four candidate values $\{0.1, 1, 10, 100\}$. $\mathcal{M}$ is optimized to minimize $\mathcal{L}$ using SGD.}

\noindent \textcolor{black}{\textbf{\circled{4} Prediction.} Once trained, $\mathcal{M}$ can directly predict the selectivities for incoming queries without needing CDF conversion.}

\eat{
Additionally, \name introduces two additional losses --- \textbf{CDF prediction loss} $\mathcal{L}_{CDFPred}$ and \textbf{consistency loss} $\mathcal{L}_{Consistent}$. 
These two regularization losses serve as the key to incorporating CDF self-consistency regularization into the query selectivity model. Hence, the final loss is defined as
\begin{equation}
    \mathcal{L} = \mathcal{L}_{OriPred} + \omega_1 \mathcal{L}_{CDFPred} + \omega_2 \mathcal{L}_{Consistent},
\end{equation}
where $\omega_1, \omega_2$ are hyper-parameters controlling the balance among the three losses. We empirically tune them from four candidate values $\{0.1, 1, 10, 100\}$.
\smallskip}

\eat{
\noindent \textbf{CDF Prediction Loss.} For each query $q$ in the training workload, the CDF prediction loss is computed using the same procedure of \neucdf. Specifically, \name converts the query into a set of CDFs that require estimation. \name then turns them into the corresponding one-sided queries and obtains the predictions from the query selectivity model. The final prediction is computed by following ~(\ref{eq.cdf}), and $\mathcal{L}_{CDFPred}$ is defined as the RMSE between the prediction from learned CDFs and the true label. This forces the model $\mathcal{M}$ to learn the underlying CDFs from the \emph{training workload}, aside from the direct mapping from queries to selectivities.

\smallskip
\noindent \textbf{Consistency Loss.}
The consistency loss begins with a query generator that samples queries from a distribution. In this paper, we use random sampling, but other sampling methods can also be applied within the \name framework. The purpose of the consistency loss is to ensure that the model’s predictions are consistent with the CDFs extracted from the model itself across a wide range of queries. By enforcing this consistency, the loss helps align the model’s predictions with the learned CDFs.}

\smallskip
\noindent \textbf{Remark.} \name does \emph{not} change the \emph{model architecture} or \emph{inference procedure} of existing selectivity models $\mathcal{M}$ that directly target selectivities. The two losses $\mathcal{L}_{\text{CDFPred}}$ and $\mathcal{L}_{\text{Consistent}}$ serve as the key to incorporating CDF self-consistency regularization into $\mathcal{M}$. Furthermore, computing these losses does not require new actual query executions to obtain selectivities, and it is significantly more efficient than performing queries on a DBMS.

\eat{
\subsection{\textbf{Optimizing Training Efficiency}}

The two regularization losses introduced by \name impose a cost on training efficiency (wall-clock time). This inefficiency stems primarily from the bitmap lookup operations required for both losses and the additional query sampling for consistency loss. Fortunately, the computational overhead associated with CDF prediction loss can be mitigated by pre-loading. Specifically, we pre-compute the CDFs for each query along with their bitmap encodings before training and reuse these computations in subsequent training epochs, as they remain unchanged throughout the training process.

However, the query generator for consistency loss samples new queries on the fly, resulting in queries (including CDFs) and bitmaps that change every epoch. Hence, pre-loading is not applicable here. To address this issue, we employ another strategy --- parallelism. By leveraging distributed training and asynchronous query sampling, we avoid delays caused by the sequential generation of augmented queries (including the associated CDFs) and their bitmap encodings. This approach not only reduces idle time but also enhances overall training throughput.

}

\section{Experiments of \name \label{section.eval}}

In this section, we implement \name and integrate it into two recent NN-based query-driven proposals --- LW-NN~\cite{dutt2019selectivity} and MSCN~\cite{kipf2019learned}. Both models utilize an MLP; however, they adopt distinct methodologies for query encoding. LW-NN employs a flattened query encoding mechanism, while MSCN stands for multi-set convolutional network.  We aim to answer \textcolor{black}{two} research questions as follows.
1) While existing query-driven models perform well for in-distribution generalization, are they robust to OOD generalization? 2) Can \name improve their OOD generalization performance while maintaining their in-distribution performance, in terms of both prediction accuracy and query latency performance?
Note that while we implement \name with two query-driven models, it is general and applicable to any loss-based deep learning models.

\subsection{Experimental Setup\label{section.exp.setup}}
\begin{table*}[!ht]
\centering
\caption{Prediction accuracy on IMDb-small (Left) and DSB (Right) w.r.t.  
 {\textcolor{darkblue}{ in-distribution queries}}/\uline{\textcolor{maroon}{out-of-distribution queries}}.}
\label{table.qerror}
\scalebox{0.76}{
\begin{tabular}{c|c|c|c|c|c|c}
\hline
  \multirow{3}{*}{\textcolor{black}{Model}}  &\multicolumn{3}{c|}{\textcolor{black}{Query Center Move}}  & \multicolumn{3}{c}{\textcolor{black}{Query Granularity Shift}}    \\
  \cline{2-7}
 &  \multirow{2}{*}{\textcolor{black}{RMSE}}  & \multicolumn{2}{c|}{\textcolor{black}{Qerror}}  &   \multirow{2}{*}{\textcolor{black}{RMSE}}  & \multicolumn{2}{c}{\textcolor{black}{Qerror}} 
  \\
\cline{3-4} \cline{6-7}
  & &  \textcolor{black}{Median} & \textcolor{black}{$90\%$}   &     & \textcolor{black}{Median} & \textcolor{black}{ $90\%$}   \\
    \hline
    \textcolor{black}{PostgreSQL}    
    &\textcolor{darkblue}{0.042}/\uline{\textcolor{maroon}{0.086}}  
    & \textcolor{darkblue}{6.3}/\uline{\textcolor{maroon}{4.2}} 
    & \textcolor{darkblue}{669}/\uline{\textcolor{maroon}{549}}
    
    &\textcolor{darkblue}{0.045}/\uline{\textcolor{maroon}{0.124}}  
    & \textcolor{darkblue}{6.1}/\uline{\textcolor{maroon}{3.7}} 
    & \textcolor{darkblue}{921}/\uline{\textcolor{maroon}{297}} \\
    
    \hline
    \textcolor{black}{Sampling} 
    &\textcolor{darkblue}{0.175}/\uline{\textcolor{maroon}{0.196}}  
    & \textcolor{darkblue}{31}/\uline{\textcolor{maroon}{35}} 
    & \textcolor{darkblue}{$10^3$}/\uline{\textcolor{maroon}{$10^3$}}
    
    &\textcolor{darkblue}{0.180}/\uline{\textcolor{maroon}{0.197}}  
    & \textcolor{darkblue}{29}/\uline{\textcolor{maroon}{21}} 
    & \textcolor{darkblue}{$ 10^3$}/\uline{\textcolor{maroon}{$10^3$}} \\
\hline
\textcolor{black}{MSCN}  
& \textcolor{darkblue}{0.020}/\uline{\textcolor{maroon}{0.700}} 
& \textcolor{darkblue}{1.6}/\uline{\textcolor{maroon}{$10^3$}} 
& \textcolor{darkblue}{6.5}/\uline{\textcolor{maroon}{$10^6$}}
&\textcolor{darkblue}{0.021}/\uline{\textcolor{maroon}{0.763}}
& \textcolor{darkblue}{1.5}/\uline{\textcolor{maroon}{$10^3$}}
& \textcolor{darkblue}{8.6}/\uline{\textcolor{maroon}{$10^6$}}
\\
   \hline
    \textcolor{black}{\textbf{MSCN + CDF}}   & \textcolor{darkblue}{0.022}/\uline{\textcolor{maroon}{0.035}} 
& \textcolor{darkblue}{1.9}/\uline{\textcolor{maroon}{2.0}} 
& \textcolor{darkblue}{7.3}/\uline{\textcolor{maroon}{10}}
&\textcolor{darkblue}{0.024}/\uline{\textcolor{maroon}{0.047}}
& \textcolor{darkblue}{1.8}/\uline{\textcolor{maroon}{1.7}}
& \textcolor{darkblue}{11}/\uline{\textcolor{maroon}{7.0}}
 \\ 
   \hline
\end{tabular}
}
\scalebox{0.76}{
\begin{tabular}{c|c|c|c|c|c|c}
\hline
  \multirow{3}{*}{\textcolor{black}{Model}}  &\multicolumn{3}{c|}{\textcolor{black}{Query Center Move}}  & \multicolumn{3}{c}{\textcolor{black}{Query Granularity Shift}}    \\
  \cline{2-7}
 &  \multirow{2}{*}{\textcolor{black}{RMSE}}  & \multicolumn{2}{c|}{\textcolor{black}{Qerror}}  &   \multirow{2}{*}{\textcolor{black}{RMSE}}  & \multicolumn{2}{c}{\textcolor{black}{Qerror}} 
  \\
\cline{3-4} \cline{6-7}
  & &  \textcolor{black}{Median} & \textcolor{black}{$90\%$}   &     & \textcolor{black}{Median} & \textcolor{black}{ $90\%$}   \\
    \hline
    \textcolor{black}{PostgreSQL}    
    &\textcolor{darkblue}{0.033}/\uline{\textcolor{maroon}{0.068}}  
    & \textcolor{darkblue}{1.6}/\uline{\textcolor{maroon}{1.9}} 
    & \textcolor{darkblue}{6.6}/\uline{\textcolor{maroon}{14}}
    
    &\textcolor{darkblue}{0.050}/\uline{\textcolor{maroon}{0.098}}  
    & \textcolor{darkblue}{1.6}/\uline{\textcolor{maroon}{2.8}} 
    & \textcolor{darkblue}{5.2}/\uline{\textcolor{maroon}{15}} \\
    
    \hline
    \textcolor{black}{Sampling} 
    &\textcolor{darkblue}{0.121}/\uline{\textcolor{maroon}{0.186}}  
    & \textcolor{darkblue}{3.7}/\uline{\textcolor{maroon}{9.2}} 
    & \textcolor{darkblue}{38}/\uline{\textcolor{maroon}{82}}
    
    &\textcolor{darkblue}{0.143}/\uline{\textcolor{maroon}{0.194}}  
    & \textcolor{darkblue}{4.7}/\uline{\textcolor{maroon}{12}} 
    & \textcolor{darkblue}{75}/\uline{\textcolor{maroon}{64}} \\
\hline
\textcolor{black}{MSCN}  
& \textcolor{darkblue}{0.057}/\uline{\textcolor{maroon}{0.283}} 
& \textcolor{darkblue}{1.4}/\uline{\textcolor{maroon}{5.0}} 
& \textcolor{darkblue}{3.6}/\uline{\textcolor{maroon}{78}}
&\textcolor{darkblue}{0.027}/\uline{\textcolor{maroon}{0.345}}
& \textcolor{darkblue}{1.2}/\uline{\textcolor{maroon}{$10^3$}}
& \textcolor{darkblue}{1.8}/\uline{\textcolor{maroon}{$10^6$}}
\\
   \hline
    \textcolor{black}{\textbf{MSCN + CDF}}   & \textcolor{darkblue}{0.061}/\uline{\textcolor{maroon}{0.158}} 
& \textcolor{darkblue}{1.6}/\uline{\textcolor{maroon}{2.1}} 
& \textcolor{darkblue}{1.6}/\uline{\textcolor{maroon}{2.1}} 
&\textcolor{darkblue}{5.3}/\uline{\textcolor{maroon}{17}}
& \textcolor{darkblue}{1.3}/\uline{\textcolor{maroon}{2.5}}
& \textcolor{darkblue}{1.8}/\uline{\textcolor{maroon}{73}}
 \\ 
   \hline
\end{tabular}
}
\end{table*}

\noindent \textbf{Datasets.}
We conducted experiments using one single-table dataset, Census, and {\color{black}three} multi-table datasets: IMDb-small, DSB~\cite{ding2021dsb} {\color{black} and CEB~\cite{negi2021flow}}. Census comprises the basic population characteristics in US, with approximately 49K tuples across 13 attributes. We use Census for prediction accuracy experiments since a few relevant approaches only support single-table queries. The IMDb~\cite{leis2015good} dataset is derived from the Internet Movie Database. Previous studies~\cite{leis2015good} show that IMDb is highly correlated and skewed. IMDb-small and CEB use 6 and \textcolor{black}{15 tables} of the original IMDb, respectively.
 DSB is as an extension of the TPC-DS benchmark~\cite{poess2002tpc}, characterized by more complex data distributions and demanding query templates. We populated a DSB database with a scale factor 50 using the default physical design configuration, and use 5 tables in our experiments. 
\smallskip

\noindent \textbf{Workloads.} Since the primary goal of this section is to assess both the in-distribution (In-Dist) and OOD generalization capabilities of query-driven models, we focus on the first two OOD scenarios as outlined in \S~\ref{section.ood.cases}. Specifically, We train models on specific query distributions and assess their performance on unseen queries both within the same distribution (In-Dist generalization) and from different distributions (OOD generalization). To generate such workloads, for each dataset, we initially create a set of candidate queries. For IMDb-small, we directly leverage the training queries from~\cite{kim2022learned} with up to 5 joins and diversified join graphs. For DSB and Census, we create candidate queries by randomly sampling join graphs and filter conditions. {\color{black} However, IMDb-small and DSB are limited to 5- and 4-way star join queries, respectively. To explore the scalability of \name, we extend our analysis to more complex join queries using template 1a of CEB, which includes 9-way joins with star, chain, and self-joins. 
Due to the limited range variation in predicate values, such as the 14 different ranges for \texttt{t.production\_year}, which does not satisfy Assumption~\ref{assump: samp_R}, we have generated new candidate queries from existing CEB-1a queries while enriching the diversity of the \texttt{t.production\_year} ranges. We denote the new 9-way join workload \textbf{CEB-1a-varied}.} After this, we obtained 50K, 60K, 70K {\color{black} and 43K~\footnote{We include all subqueries of CEB-1a-varied queries in the accuracy experiment}} candidate queries for Census, DSB, IMDb-small and \textcolor{black}{CEB-1a-varied}, respectively. From the candidate queries, we simulate training and test workloads for both two OOD scenarios.

To simulate OOD scenarios, we designate a shifting attribute $a$, for each dataset: \texttt{age} for the Census dataset, \texttt{t.production\_year} for IMDb-small \textcolor{black}{and CEB-1a-varied}, and \texttt{ss.ss\_list\_price} for DSB. In both OOD scenarios, models are trained on queries with the attribute $a$ normalized within specific bounds ($c_a$ for query centers and $l_a$ for range lengths). For in-distribution generalization, models are evaluated on queries matching training conditions. For OOD generalization, they are tested on queries where $c_a$ (for center move)  or $l_a$ (for granularity shift) falls outside these bounds. Training and test queries are kept strictly non-overlapping.

 
\smallskip
\noindent \textbf{Compared Approaches.}  We implemented LW-NN~\cite{dutt2019selectivity} ourselves. For MSCN, we used the code from~\cite{mscncode}. We evaluate MSCN and LW-NN trained with \name\footnote{The repository containing the code and data will be included in our official version}, referred to as MSCN+CDF and LW-NN+CDF. We include two query-driven approaches, \pts and \quadh (code from ~\cite{ptscode}), which are based on SOTA theory~\cite{hu2022selectivity}, to demonstrate the limitations of PAC learning. We also include \quick~\cite{park2020quicksel} in our comparison. The three query-driven models are induced from \emph{probability measures} where our OOD generalization result (Thm~\ref{thm: ood}) is applicable. For data-driven approaches, we use \pg (multi-dimensional histograms) and uniform sampling (\samp) as baselines. We do not include other data-driven approaches since this paper focuses on query-driven models. Note that \pts, \quadh, \quick, and the LW-NN we implemented do not support joins, so we evaluate them on Census. For a fair comparison, we exclude data information (\textit{e.g.,} bitmaps) from LW-NN or MSCN, since other query-driven models only utilize query information. We turn on the bitmaps in multi-table experiments. \textcolor{black}{We also compare another strategy for improving generalizability: Robust-MSCN~\cite{negi2023robust} (join bitmaps and query masking), and its variant, Robust-MSCN*, which excludes query masking. Our experiments show that removing query masking improves the performance of Robust-MSCN in the two OOD scenarios (which is likely because we do not include PostgreSQL estimates in the query encoding). We report their results on CEB-1a-varied (which contains the most complex joins, increasing the challenge for query optimization) due to space constraints, as we observe similar trends across other datasets.}

The goal of the experiments is \emph{not} to beat the SOTA query-driven models but to validate the \emph{practicality} of our theory. Specifically, we aim to show that \name, which is designed based on our theory, \emph{reliably} improves upon existing NN-based query-driven models, and \emph{consistently} outperforms the models derived from SOTA theory.

\smallskip

\noindent \textbf{Evaluation Metrics.} For accuracy, we use both RMSE 
and Qerror as the metrics. While our theory assumes absolute error as the loss function, we also evaluate Qerror (which is more critical in query optimization~\cite{moerkotte2009preventing}) to demonstrate the effectiveness of \name. For query latency performance, we report the query running time.

\smallskip

\noindent \textbf{Hardware.} We train all NN models on an Amazon SageMaker ml.g4dn.xlarge node, and conduct latency experiments on an EC2 r5d.2xlarge node (8 core CPUs, 3.1GHz, 64G memory) for IMDb-small and CEB, and on an EC2 c5.9xlarge node (36 core CPUs, 3.1GHz, 72G memory) for DSB.

\subsection{Accuracy~\label{section.exp.accuracy}}

Figure~\ref{figure.exp.census} and Table~\ref{table.qerror} present the prediction accuracy on single-table and multi-table datasets, respectively.
First, deep query-driven models (MSCN and LW-NN) demonstrate superior performance for In-Dist generalization across all datasets and consistently outperform all compared data-driven approaches on multi-table datasets. They perform comparably to \pg on single-table queries, where \pg is already effective.  For \pts and \quadh, despite that they outperform \quick and theoretically benefit from the SOTA theory, they fail to match the empirical performance of the two deep query-driven models due to their limited model capacity, especially for Qerror (Figure~\ref{figure.exp.census.qerror}) since they are optimized specifically for RMSE.
\emph{These findings confirm the In-Dist generalization capability of deep learning-based query-driven models.}

However, they show limited robustness to OOD queries, especially in multi-table datasets with intricate joins and skewed distributions. For example, MSCN achieves strong In-Dist accuracy on \textcolor{black}{the three multi-table datasets}, with median Qerror below 2 and 90th percentile values in single digits. Yet, it struggles with OOD generalization on IMDb-small, where it exhibits an RMSE of about 0.7 and median Qerror in four-digit, significantly underperforming compared to \pg and \samp.
On DSB, MSCN shows less vulnerability to query center shifts. This is likely due to less skewed data distributions, allowing easier adaptation of selectivity functions across different data regions. 
\pts, \quadh, and \quick do not exhibit such drastic drops in OOD performance because they are induced by probability measures. This supports our OOD generalization theory as signed measures are a superset of probability measures and thus fall within the scope of our theory.


\begin{figure}
\centering
\scalebox{0.97}{
\begin{subfigure}{0.235\textwidth}
\captionsetup{justification=centering, skip=0pt}
\includegraphics[height=0.38\textwidth]{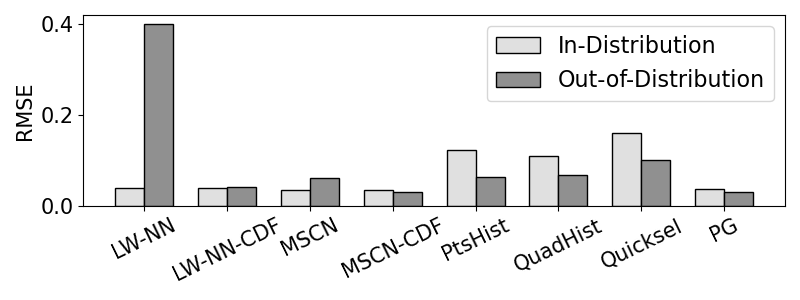}
\caption{RMSE} 
\end{subfigure}
\begin{subfigure}{0.235\textwidth}
\captionsetup{justification=centering, skip=0pt}
\includegraphics[height=0.38\textwidth]{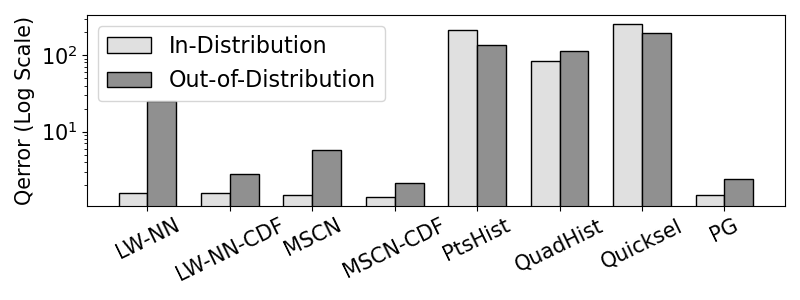}
\caption{Median Qerror~\label{figure.exp.census.qerror}} 
\end{subfigure}
}

\caption{Accuracy on Census with granularity shifts.\label{figure.exp.census}}
\end{figure}

\begin{figure*}
  \caption{\textcolor{black}{Per-query latency performance on CEB-1a-varied under granularity shift. Left: In-Dist queries; Right: OOD queries.}~\label{fig:ceb_qo}}
    \centering
    \begin{subfigure}[b]{0.48\textwidth}
        \includegraphics[width=\textwidth]{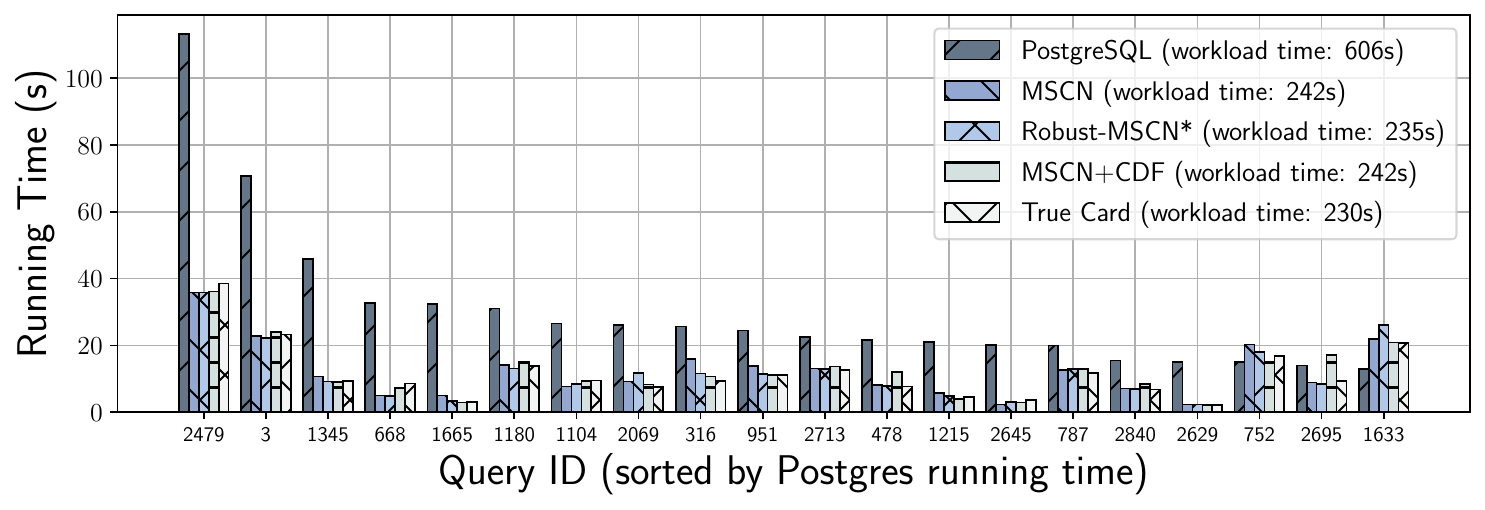}
    \end{subfigure}
    \hfill
    \begin{subfigure}[b]{0.48\textwidth}
        \includegraphics[width=\textwidth]{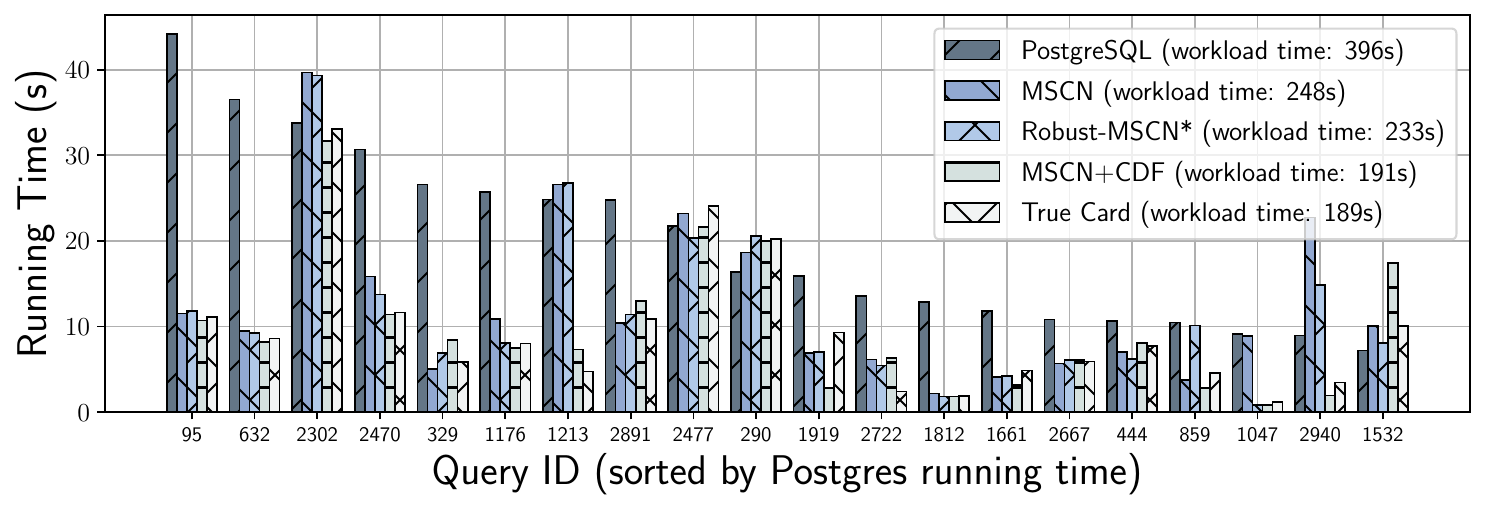}
    \end{subfigure}
\end{figure*}

More importantly and perhaps not surprisingly, the integration of \name significantly enhances the OOD generalization capabilities of query-driven models like MSCN, without compromising their In-Dist generalization. For instance, in the first OOD scenario (query center move), \name training reduces MSCN's median and 90-percentile Qerror from four- and  seven-digit values to just 2 and 10, respectively. Similar dramatic improvements are evident in the second OOD scenario. Moreover, \name does not adversely affect the model's performance on In-Dist generalization. 

\begin{table}
\centering
\scalebox{0.76}{
\centering
\begin{tabular}{c|c|c|c|c|c|c}
\hline
  \multirow{3}{*}{\textcolor{black}{Model}}  &\multicolumn{3}{c|}{\textcolor{black}{Query Center Move}}  & \multicolumn{3}{c}{\textcolor{black}{Query Granularity Shift}}    \\
  \cline{2-7}
 &  \multirow{2}{*}{\textcolor{black}{RMSE}}  & \multicolumn{2}{c|}{\textcolor{black}{Qerror}}  &   \multirow{2}{*}{\textcolor{black}{RMSE}}  & \multicolumn{2}{c}{\textcolor{black}{Qerror}} 
  \\
\cline{3-4} \cline{6-7}
  & &  \textcolor{black}{Median} & \textcolor{black}{$90\%$}   &     & \textcolor{black}{Median} & \textcolor{black}{ $90\%$}   \\
    \hline
    \textcolor{black}{PostgreSQL}    
    &\textcolor{darkblue}{0.054}/\uline{\textcolor{maroon}{0.062}}  
    & \textcolor{darkblue}{22}/\uline{\textcolor{maroon}{6.8}} 
    & \textcolor{darkblue}{$10^3$}/\uline{\textcolor{maroon}{213}}
    
    &\textcolor{darkblue}{0.088}/\uline{\textcolor{maroon}{0.038}}  
    & \textcolor{darkblue}{23}/\uline{\textcolor{maroon}{8.3}} 
    & \textcolor{darkblue}{$10^3$}/\uline{\textcolor{maroon}{269}} \\
    
    \hline
    \textcolor{black}{Sampling} 
    &\textcolor{darkblue}{0.144}/\uline{\textcolor{maroon}{0.089}}  
    & \textcolor{darkblue}{$ 10^3$}/\uline{\textcolor{maroon}{$10^4$}} 
    & \textcolor{darkblue}{$ 10^5$}/\uline{\textcolor{maroon}{$10^5$}}
    
    &\textcolor{darkblue}{0.191}/\uline{\textcolor{maroon}{0.051}}  
    & \textcolor{darkblue}{$ 10^4$}/\uline{\textcolor{maroon}{$10^3$}} 
    & \textcolor{darkblue}{$ 10^5$}/\uline{\textcolor{maroon}{$10^5$}} \\
\hline
\textcolor{black}{MSCN}  
& \textcolor{darkblue}{0.012}/\uline{\textcolor{maroon}{0.045}} 
& \textcolor{darkblue}{1.1}/\uline{\textcolor{maroon}{1.9}} 
& \textcolor{darkblue}{1.5}/\uline{\textcolor{maroon}{42}}
&\textcolor{darkblue}{0.014}/\uline{\textcolor{maroon}{0.117}}
& \textcolor{darkblue}{1.1}/\uline{\textcolor{maroon}{5.8}}
& \textcolor{darkblue}{1.3}/\uline{\textcolor{maroon}{53}}
\\
   \hline
    \textcolor{black}{Robust-MSCN}   &  
    \textcolor{darkblue}{0.019}/\uline{\textcolor{maroon}{0.050}}& \textcolor{darkblue}{1.2}/\uline{\textcolor{maroon}{2.7}} & \textcolor{darkblue}{1.7}/\uline{\textcolor{maroon}{25}}  &
    \textcolor{darkblue}{0.022}/\uline{\textcolor{maroon}{0.152}}& \textcolor{darkblue}{1.2}/\uline{\textcolor{maroon}{7.1}} & \textcolor{darkblue}{1.6}/\uline{\textcolor{maroon}{81}} \\
      \textcolor{black}{Robust-MSCN*}   &  
    \textcolor{darkblue}{0.011}/\uline{\textcolor{maroon}{0.045}}& \textcolor{darkblue}{1.1}/\uline{\textcolor{maroon}{2.1}} & \textcolor{darkblue}{1.3}/\uline{\textcolor{maroon}{15}}  &
    \textcolor{darkblue}{0.015}/\uline{\textcolor{maroon}{0.082}}& \textcolor{darkblue}{1.1}/\uline{\textcolor{maroon}{3.7}} & \textcolor{darkblue}{1.4}/\uline{\textcolor{maroon}{31}} \\
    \hline
    \textcolor{black}{\textbf{MSCN + CDF}}   & \textcolor{darkblue}{0.010}/\uline{\textcolor{maroon}{0.019}} 
& \textcolor{darkblue}{1.1}/\uline{\textcolor{maroon}{1.6}} 
& \textcolor{darkblue}{1.4}/\uline{\textcolor{maroon}{8.7}}
&\textcolor{darkblue}{0.012}/\uline{\textcolor{maroon}{0.012}}
& \textcolor{darkblue}{1.1}/\uline{\textcolor{maroon}{1.5}}
& \textcolor{darkblue}{1.3}/\uline{\textcolor{maroon}{5.4}}
 \\ 
   \hline
\end{tabular}}
\caption{\textcolor{black}{Accuracy on CEB-1a-varied ({\textcolor{darkblue}{In-Dist}}/\uline{\textcolor{maroon}{OOD}}).}~\label{table.ceb.qerror}}
\end{table}
\smallskip
{\color{black} 
\noindent \textbf{More Joins.} Table~\ref{table.ceb.qerror} shows the accuracy over CEB-1a-varied (featuring 9-way joins). We observe similar trends in the previous two multi-table datasets: \name significantly enhances MSCN's OOD performance despite the increased complexity.
}

\smallskip
{\color{black} 
\noindent \textbf{Comparison with Robust-MSCN.} Table~\ref{table.ceb.qerror} reveals that while Robust-MSCN* marginally improves upon MSCN, they are surpassed by \name. This is because Robust-MSCN is tailored for \emph{different OOD scenarios} like new join templates and missing tables/columns, which are not the primary focus of this paper.
}

\smallskip
{\color{black} 
\noindent \textbf{Point Queries.} Figure~\ref{figure.exp.point} presents the OOD performance of MSCN and MSCN+CDF on IMDb-small for point queries. The results indicate that, by treating point queries as range queries, \name still enhances the OOD robustness of MSCN in these cases.

\begin{figure}
  \centering
  \begin{subfigure}[b]{0.235\textwidth}
  \captionsetup{justification=centering, skip=0pt}
      \includegraphics[width=\textwidth]{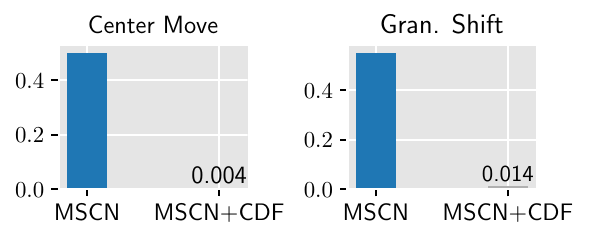}
      \caption{\textcolor{black}{RMSE}}
  \end{subfigure}
  \hfill
  \begin{subfigure}[b]{0.235\textwidth}
  \captionsetup{justification=centering, skip=0pt}
      \includegraphics[width=\textwidth]{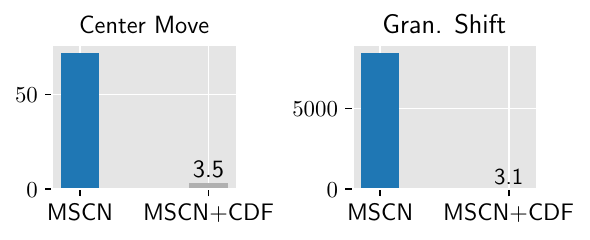}
      \caption{\textcolor{black}{Median Qerror}}
  \end{subfigure}
  
   \caption{\textcolor{black}{Accuracy of OOD point queries on IMDb-small.}~\label{figure.exp.point}}
\end{figure}
}

\subsection{Query Latency Performance~\label{section.exp.e2e}}

In this subsection, we showcase the improved generalization capabilities from \name can result in a better end-to-end performance. All end-to-end experiments are conducted with a modified PostgreSQL 13.1 that can accept injected cardinalities estimates~\cite{pgcode,ceb}. We exclude \samp in the experiments since it is much worse than others. We compare MSCN+CDF to the original MSCN, PostgreSQL (an important baseline upon which learned cardinality estimation should improve), and True cardinalities. For each OOD scenario, we randomly sample 30 queries each from In-Dist and OOD test queries to conduct the latency experiments. The results for OOD queries are shown in Figure~\ref{figure.exp.imdb}. {\color{black} Due to space constraints, we exclude In-Dist performance results,  but we note that both the MSCN and MSCN+CDF demonstrate notably efficient running times for In-Dist queries, significantly surpassing \pg on IMDb-small and matching its performance on DSB. Indeed, they are close to True cardinalities on both datasets.}

\begin{figure}
\centering
\begin{subfigure}{0.235\textwidth}
\captionsetup{justification=centering, skip=3pt}
\includegraphics[height=0.35\textwidth]{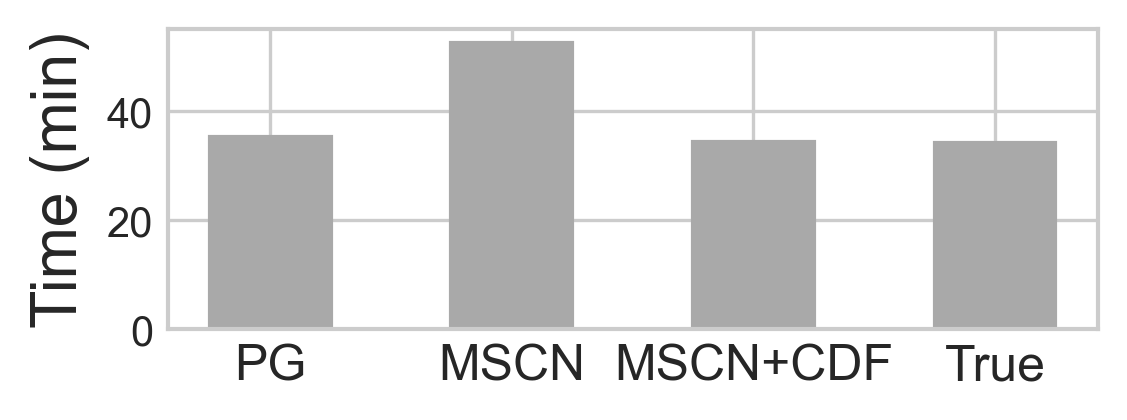}
\vspace{-0.4em}
\caption{center move}       
\end{subfigure}
\begin{subfigure}{0.235\textwidth}
\captionsetup{justification=centering, skip=3pt}
\includegraphics[height=0.35\textwidth]{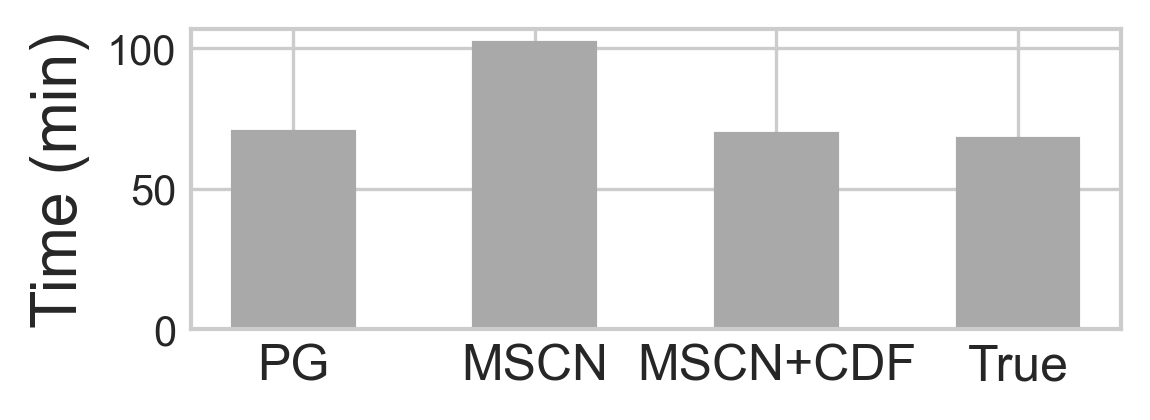}
\vspace{-0.4em}
\caption{\mbox{granularity shift}}     
\end{subfigure}
\begin{subfigure}{0.235\textwidth}
\captionsetup{justification=centering, skip=3pt}
\includegraphics[height=0.35\textwidth]{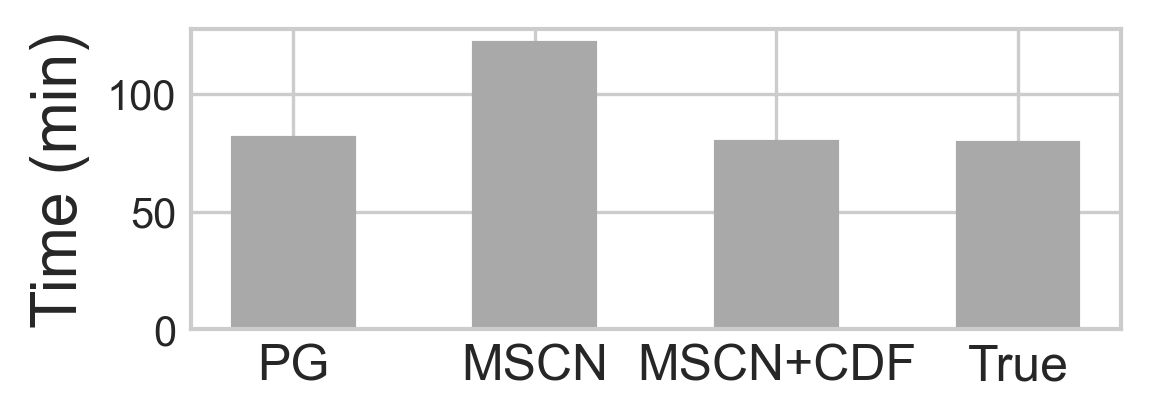}
\vspace{-0.4em}
\caption{center move}     
\end{subfigure}
\begin{subfigure}{0.235\textwidth}
\captionsetup{justification=centering, skip=3pt}
\includegraphics[height=0.35\textwidth]{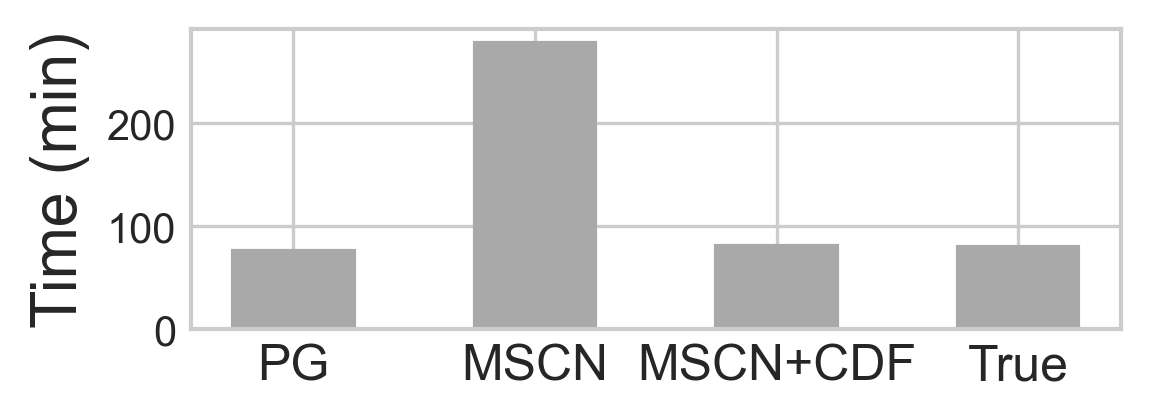}
\vspace{-0.4em}
\caption{\mbox{granularity shift}}      
\end{subfigure}
\caption{OOD query latency performance on IMDb-small (top two subfigures) and DSB (bottom two subfigures).~\label{figure.exp.imdb}}
\end{figure}

The OOD results yield two key insights. First, \emph{the inaccurate cardinality estimates by MSCN for OOD queries lead to considerably poorer query latency performance compared to In-Dist queries.} Notably, MSCN's latency performance is significantly worse than \pg for both IMDb-small OOD queries. Second, the integration of \name significantly enhances MSCN+CDF’s OOD latency performance, bringing it on par with \pg. This demonstrates that the improved accuracy from \name for OOD generalization can translate into enhanced runtime performance.

\smallskip
{\color{black} 
\noindent \textbf{More Joins.} To assess scalability, we conducted latency experiments on CEB-1a-varied, using the same approach to construct workloads of 20 In-Dist and OOD test queries (with 9-way joins) each (we observed consistent results across various sampled workloads). Figure~\ref{fig:ceb_qo} presents per-query latency with workload times indicated in the legend. All MSCN models significantly outperform \pg, as traditional methods struggle with larger numbers of joins. Moreover, MSCN+CDF outperforms MSCN in OOD scenarios. Notably, \name reduces MSCN's running time significantly (by at least a factor of two) in 4 of the 20 OOD queries, with no substantial regressions.  These results confirm that \name scales effectively to more joins.
Additionally, consistent with the observations in \S~\ref{section.exp.accuracy}, \name outperforms Robust-MSCN in enhancing MSCN's query latency for the OOD scenarios discussed in the paper. 
}

\subsection{Efficiency~\label{section.exp.efficiency}}
 \noindent {\color{black} \textbf{Training}.} \name uses pre-loading and asynchronous query sampling {\color{black}(parallelizing two loss computations)} to minimize idle time during training. Training times per epoch are 50s for IMDb-small, 26s for DSB, \textcolor{black}{and $63$s for CEB. MSCN+CDF converges within 80 epochs for all datasets. While the training overheads are higher than MSCN, they are not costly. Additionally, since the training is performed offline, it does not impact real query performance.}

 \noindent {\color{black} \textbf{Inference}. Inference time is crucial for real query performance (including planning and execution). Since \name \emph{does not change the model architecture or inference procedure}, the inference process remains efficient. On a CPU, the average processing time for each query (including subqueries) is 1ms for IMDb-small and DSB, and 14ms for CEB-1a-varied, negligible compared to execution times.

 }



\section{Conclusions and Open Problems~\label{section.conclusion}} 
In this paper, we proved the theory: selectivity predictors induced by a signed measure are learnable, and under mild assumptions, they exhibit \textbf{bounded OOD generalization error}. Based on the theory, we propose a new selectivity estimation paradigm \neucdf, and a principled training framework \name to enhance OOD generalization capabilities for \emph{any} NN-based existing query-driven selectivity models. We empirically demonstrate that \name improves query-driven models' OOD generalization performance in terms of accuracy and query latency performance.

This work opens up many promising research directions. First, extending our theory beyond signed measures could provide new insights. Second, substituting the error function in our theory with Qerror presents an intriguing challenge. Furthermore,  applying our theory to generate queries for effective training is interesting. It is also worth exploring the connection between selectivity learning and the Learning from Label Proportions (LLP) problem~\cite{scott2020learning, zhang2022learning}, as a range query can essentially be viewed as a bag of data points.

\bibliographystyle{acm}
\bibliography{main}
\end{document}